\documentclass{colt2013} 

\usepackage{amsfonts,amsmath,color,float,verbatim}
\usepackage{algorithm,algorithmic}
\usepackage{multirow,array,xfrac}

\newtheorem{assumption}{Assumption}

\newcommand{\reals}{\mathbb{R}}
\newcommand{\E}{\mathbb{E}}

\newcommand{\be}{\mathbf{e}}
\newcommand{\bx}{\mathbf{x}}
\newcommand{\bw}{\mathbf{w}}
\newcommand{\bg}{\mathbf{g}}
\newcommand{\bb}{\mathbf{b}}

\newcommand{\br}{\mathbf{r}}

\newcommand{\Ocal}{\mathcal{O}}

\newcommand{\Ncal}{\mathcal{N}}

\newcommand{\Wcal}{\mathcal{W}}

\newcommand{\norm}[1]{\|#1\|}
\newcommand{\inner}[1]{\langle#1\rangle}

\newcommand{\secref}[1]{Sec.~\ref{#1}}

\newcommand{\figref}[1]{Fig.~\ref{#1}}
\renewcommand{\eqref}[1]{Eq.~(\ref{#1})}
\newcommand{\lemref}[1]{Lemma~\ref{#1}}
\newcommand{\thmref}[1]{Thm.~\ref{#1}}

\newcommand{\algref}[1]{Algorithm~\ref{#1}}



\title[Complexity of Bandit and Derivative-Free Stochastic Convex Optimization]{On the Complexity of Bandit and Derivative-Free\\ Stochastic Convex Optimization}

\coltauthor{\Name{Ohad Shamir} \Email{ohad.shamir@weizmann.ac.il}\\
\addr Microsoft Research and the Weizmann Institute of Science
}

\begin{document}

\maketitle

\begin{abstract}
The problem of stochastic convex optimization with bandit feedback (in the learning community) or without knowledge of gradients (in the optimization community) has received much attention in recent years, in the form of algorithms and performance upper bounds. However, much less is known about the inherent complexity of these problems, and there are few lower bounds in the literature, especially for nonlinear functions. In this paper, we investigate the attainable error/regret in the bandit and derivative-free settings, as a function of the dimension $d$ and the available number of queries $T$. We provide a precise characterization of the attainable performance for strongly-convex and smooth functions, which also imply a non-trivial lower bound for more general problems. Moreover, we prove that in both the bandit and derivative-free setting, the required number of queries must scale at least quadratically with the dimension. Finally, we show that on the natural class of quadratic functions, it is possible to obtain a ``fast'' $\Ocal(1/T)$ error rate in terms of $T$, under mild assumptions, even without having access to gradients. To the best of our knowledge, this is the first such rate in a derivative-free stochastic setting, and holds despite previous results which seem to imply the contrary.
\end{abstract}
\begin{keywords}
Stochastic Convex Optimization; Derivative-Free Optimization; Bandit Convex Optimization; Regret
\end{keywords}

\section{Introduction}

This paper considers the following fundamental question: Given an unknown convex function $F$, and the ability to query for (possibly noisy) realizations of its values at various points, how can we optimize $F$ with as few queries as possible?

This question, under different guises, has played an important role in several communities. In the optimization community, this is usually known as  ``zeroth-order'' or ``derivative-free'' convex optimization, since we only have access to function values rather than gradients or higher-order information. The goal is to return a point with small optimization error on some convex domain, using a limited number of queries. Derivative-free methods were among the earliest algorithms to numerically solve unconstrained optimization problems, and have recently enjoyed increasing interest, being especial useful in black-box situations where gradient information is hard to compute or does not exist \cite{nesterov11,StichMuGa11}. In a stochastic framework, we can only obtain noisy realizations of the function values (for instance, due to running the optimization process on sampled data). We refer to this setting as \emph{derivative-free SCO} (short for stochastic convex optimization).

In the learning community, these kinds of problems have been closely studied in the context of multi-armed bandits and (more generally) bandit online optimization, which are powerful models for sequential decision making under uncertainty \cite{CesaBianchiLu06,BubCes12}. In a stochastic framework, these settings correspond to repeatedly choosing points in some convex domain, obtaining noisy realizations of some underlying convex function's value. However, rather than minimizing optimization error, our goal is to minimize the (average) regret: roughly speaking, that the average of the function values we obtain is not much larger than the minimal function value. For example, the well-known multi-armed bandit problem corresponds to a linear function over the simplex. We refer to this setting as \emph{bandit SCO}. As will be more explicitly discussed later on, any algorithm which attains small average regret can be converted to an algorithm with the same optimization error. In other words, bandit SCO is only harder than derivative-free SCO. We note that in the context of stochastic multi-armed bandits, the potential gap between the two settings (under the terms ``cumulative regret'' and ``simple regret'') was introduced and studied in \cite{bubeck2011pure}.

When one is given gradient information, the attainable optimization error / average regret is well-known: under mild conditions, it is $\Theta(1/\sqrt{T})$ for convex functions and $\Theta(1/T)$ for strongly-convex functions, where $T$ is the number of queries \cite{Zin03,HazKa11,RakhShaSri12}. Note that these bounds do not explicitly depend on the dimension of the domain.

The inherent complexity of bandit/derivative-free SCO is not as well-understood. An important exception is multi-armed bandits, where the attainable error/regret is known to be exactly $\Theta(\sqrt{d/T})$, where $d$ is the dimension and $T$ is the number of queries\footnote{\label{footnote:mab} In a stochastic setting, a more common bound in the literature is $\Ocal(d\log(T)/T)$, but the $\Ocal$-notation hides a non-trivial dependence on the form of the underlying linear function (in multi-armed bandits terminology, a gap between the expected rewards bounded away from $0$). Such assumptions are not natural in a nonlinear bandits SCO setup, and without them, the regret is indeed $\Theta(\sqrt{d/T})$. See for instance \cite[Chapter 2]{BubCes12} for more details.} \cite{AuerCesFrSc02,AudBub09}. Linear functions over other convex domains has also been explored, with upper bounds on the order of $\Ocal(\sqrt{d/T})$ to $\Ocal(\sqrt{d^2/T})$ (e.g. \cite{abpasz11,BubCesKa12}). For linear functions over general domains, information-theoretic $\Omega(\sqrt{d^2/T})$ lower bounds have been proven in \cite{DanHaKa07,DanHaKa08,AudBubLu11}. However, these lower bounds are either on the regret (not optimization error); shown for non-convex domains; or are implicit and rely on artificial, carefully constructed domains. In contrast, we focus here on simple, natural domains and convex problems.

When dealing with more general, non-linear functions, much less is known. The problem was originally considered over 30 years ago, in the seminal work by Yudin and Nemirovsky on the complexity of optimization \cite{YudNem83}. The authors provided some algorithms and upper bounds, but as they themselves emphasize (cf. pg. 359), the attainable complexity is far from clear. Quite recently, \cite{JaNoRe12} provided an $\Omega(\sqrt{d/T})$ lower bound for strongly-convex functions, which demonstrates that the ``fast'' $\Ocal(1/T)$ rate in terms of $T$, that one enjoys with gradient information, is not possible here. In contrast, the current best-known upper bounds are $\Ocal(\sqrt[4]{d^2/T}), \Ocal(\sqrt[3]{d^2/T}), \Ocal(\sqrt{d^2/T})$ for convex, strongly-convex, and strongly-convex-and-smooth functions respectively (\cite{FlaxKaMc05,AgDeXi10}); And a $\Ocal(\sqrt{d^{32}/T})$ bound for convex functions (\cite{AgFoHsKaRa11}), which is better in terms of dependence on $T$ but very bad in terms of the dimension $d$.

In this paper, we investigate the complexity of bandit and derivative-free stochastic convex optimization, focusing on nonlinear functions, with the following contributions (see also the summary in Table \ref{table:results}):
\begin{itemize}
    \item We prove that for strongly-convex and smooth functions, the attainable error/regret is exactly $\Theta(\sqrt{d^2/T})$. This has three important ramifications: First of all, it settles the question of attainable performance for such functions, and is the first sharp characterization of complexity for a general nonlinear bandit/derivative-free class of problems. Second, it proves that the required number of queries $T$ in such problems must scale quadratically with the dimension, even in the easier optimization setting, and in contrast to the linear case which often allows linear scaling with the dimension. Third, it formally provides a natural $\Omega(\sqrt{d^2/T})$ lower bound for more general classes of convex problems.
    \item We analyze an important special case of strongly-convex and smooth functions, namely quadratic functions. We show that for such functions, one can (efficiently) attain $\Theta(d^2/T)$ optimization error, and that this rate is sharp. To the best of our knowledge, it is the first general class of nonlinear functions for which one can show a ``fast rate'' (in terms of $T$) in a derivative-free stochastic setting. In fact, this may seem to contradict the result in \cite{JaNoRe12}, which shows an $\Omega(\sqrt{d/T})$ lower bound on quadratic functions. However, as we explain in more detail later on, there is no contradiction, since the example establishing the lower bound of \cite{JaNoRe12} imposes an extremely small domain (which actually decays with $T$), while our result holds for a fixed domain. Although this result is tight, we also show that under more restrictive assumptions on the noise process, it is sometimes possible to obtain better error bounds, as good as $\Ocal(d/T)$.
    \item We prove that even for quadratic functions, the attainable average \emph{regret} is exactly $\Theta(\sqrt{d^2/T})$, in contrast to the $\Theta(d^2/T)$ result for optimization error. This shows there is a real gap between what can be obtained for derivative-free SCO and bandit SCO, without any specific distributional assumptions. Again, this stands in contrast to settings such as multi-armed bandits, where there is no difference in their distribution-free performance.
\end{itemize}

We emphasize that our upper bounds are based on the assumption that the function minimizer is bounded away from the domain boundary, or that we can query points slightly outside the domain. However, we argue that this assumption is not very restrictive in the context of strongly-convex functions (especially in learning applications), where the domain is often $\reals^d$, and a minimizer always exists.

The paper is structured as follows: In \secref{sec:preliminaries}, we formally define the setup and introduce the notation we shall use in the remainder of the paper. For clarity of exposition, we begin with the case of quadratic functions in \secref{sec:quadratic}, providing algorithms, upper and lower bounds. The tools and insights we develop for the quadratic case will allow us to tackle the more general strongly-convex-and-smooth setting in \secref{sec:strongly_convex}. We end the main part of the paper with a summary and discussion of open problems in \secref{sec:discussion}. In Appendix \ref{sec:ridge}, we demonstrate that one can obtain improved performance in the quadratic case, if we're considering more specific natural noise processes. Additional proofs are presented in Appendix \ref{sec:technical}.


\setlength{\fboxsep}{1pt}

\begin{table}
\renewcommand{\arraystretch}{1.5}
\begin{tabular}{|c||c|c||c|c|}
\hline
& \multicolumn{2}{c||}{\textbf{Optimization Error}} & \multicolumn{2}{c|}{\textbf{Average Regret}}\\
\hline
\textbf{Function Type} & \textbf{$\Ocal(\cdot)$} & \textbf{$\Omega(\cdot)$} & \textbf{$\Ocal(\cdot)$} & \textbf{$\Omega(\cdot)$} \\
\hline\hline
Quadratic & \multicolumn{2}{c||}{\framebox{$\mathbf{\frac{d^2}{T}}$}} & \multicolumn{2}{c|}{\framebox{$\mathbf{\sqrt{\frac{d^2}{T}}}$}}\\
\hline
Str. Convex and Smooth & \multicolumn{4}{c|}{\framebox{$\mathbf{\sqrt{\frac{d^2}{T}}}$}}\\
\hline
Str. Convex & $\min\left\{\sqrt[3]{\frac{d^2}{T}},\sqrt{\frac{d^{32}}{T}}\right\}$ & \framebox{$\mathbf{\sqrt{\frac{d^2}{T}}}$} & $\min\left\{\sqrt[3]{\frac{d^2}{T}},\sqrt{\frac{d^{32}}{T}}\right\}$ & \framebox{$\mathbf{\sqrt{\frac{d^2}{T}}}$}\\
\hline
Convex & $\min\left\{\sqrt[4]{\frac{d^2}{T}},\sqrt{\frac{d^{32}}{T}}\right\}$ & \framebox{$\mathbf{\sqrt{\frac{d^2}{T}}}$} & $\min\left\{\sqrt[4]{\frac{d^2}{T}},\sqrt{\frac{d^{32}}{T}}\right\}$ & \framebox{$\mathbf{\sqrt{\frac{d^2}{T}}}$}\\
\hline
\end{tabular}
\renewcommand{\arraystretch}{1}
\caption{A summary of the complexity upper bounds ($\Ocal(\cdot)$) and lower bounds ($\Omega(\cdot)$), for derivative-free stochastic convex optimization (optimization error) and bandit stochastic convex optimization (average regret), for various function classes, in terms of the dimension $d$ and the number of queries $T$. The boxed results are shown in this paper. The upper bounds for the convex and strongly convex case combine results from \cite{FlaxKaMc05,AgDeXi10,AgFoHsKaRa11}. The table shows dependence on $d,T$ only and ignores other factors and constants.}
\label{table:results}
\end{table}

\section{Preliminaries}\label{sec:preliminaries}

Let $\norm{\cdot}$ denote the standard Euclidean norm. We let $F(\cdot):\Wcal\mapsto \reals$ denote the convex function of interest, where $\Wcal\subseteq \reals^d$ is a (closed) convex domain. We say that $F$ is $\lambda$-strongly convex, for $\lambda>0$, if for any $\bw,\bw'\in \Wcal$ and any subgradient $\bg$ of $F$ at $\bw$, it holds that $F(\bw')\geq F(\bw)+\inner{\bg,\bw'-\bw}+\frac{\lambda}{2}\norm{\bw'-\bw}^2$. Intuitively, this means that we can \emph{lower bound} $F$ everywhere by a quadratic function of fixed curvature. We say that $F$ is $\mu$-smooth if for any $\bw,\bw'\in \Wcal$, and any subgradient $\bg$ of $F$ at $\bw$, it holds that $F(\bw')\leq F(\bw)+\inner{\bg,\bw'-\bw}+\frac{\mu}{2}\norm{\bw'-\bw}^2$. Intuitively, this means that we can \emph{upper-bound} $F$ everywhere by a quadratic function of fixed curvature. We let $\bw^*\in \Wcal$ denote a minimizer of $F$ on $\bw$. To prevent trivialities, we consider in this paper only functions whose optimum $\bw^*$ is known beforehand to lie in some bounded domain (even if $\Wcal$ is large or all of $\reals^d$), and the function is Lipschitz in that domain.

The learning/optimization process proceeds in $T$ rounds. Each round $t$, we pick and query a point $\bw_t\in \Wcal$, obtaining an independent realization of $F(\bw)+\xi_{\bw}$, where $\xi_{\bw}$ is an unknown zero-mean random variable, such\footnote{We note that this slightly deviates from the more common assumption in the bandits/derivative-free SCO setting that $\E[\xi_{\bw}^2]\leq \Ocal(1)$. While such assumptions are equivalent for bounded $\Wcal$, we also wish to consider cases with unrestricted domains $\Wcal=\reals^d$. In that case, assuming $\E[\xi_{\bw}^2]\leq \Ocal(1)$ may lead to trivialities in the derivative-free setting. For example, consider the case where $F(\bw)=\bw^\top A \bw+\bb^\top \bw$. Then for any $\bw$ and any $\xi_{\bw}$ with uniformly bounded variance, we can get a virtually noiseless estimate of $\bw^\top A \bw$ by picking $\bw'=c\bw$ for some large $c$ and computing $\frac{1}{c^2}\left(F(\bw')+\xi_{\bw'}\right)$. Variants of this idea will also allow virtually noiseless estimates of the linear term.} that $\E[\xi_{\bw}^2]\leq \max\left\{1,\norm{\bw}^2\right\}$. In the bandit SCO setting, our goal is to minimize the \emph{expected average regret}, namely
\[
\E\left[\frac{1}{T}\sum_{t=1}^{T}F(\bw_t)-F(\bw^*)\right],
\]
whereas in the derivative-free SCO setting, our goal is to compute, based on $\bw_1,\ldots,\bw_T$ and the observed values, some point $\bar{\bw}_T\in\Wcal$, such that the \emph{expected optimization error}
\[
\E\left[F(\bar{\bw}_T)-F(\bw^*)\right],
\]
is as small as possible. We note that given a bandit SCO algorithm with some regret bound, one can get a derivative-free SCO algorithm with the same optimization error bound: we simply run the stochastic bandit algorithm, getting $\bw_1,\ldots,\bw_T$, and returning $\frac{1}{T}\sum_{t=1}^{T}\bw_t$. By Jensen's inequality, the expected optimization error is at most the expected average regret with respect to $\bw_1,\ldots,\bw_T$. Thus, bandit SCO is only harder than derivative-free SCO.

In this paper, we provide upper and lower bounds on the attainable optimization error / average regret, as a function of the dimension $d$ and the number of rounds/queries $T$. For simplicity, we focus here on bounds which hold in expectation, and an interesting point for further research is to extend these to bounds on the actual error/regret, which hold with high probability.

\section{Quadratic Functions}\label{sec:quadratic}

In this section, we consider the class of quadratic functions, which have the form
\[
F(\bw) = \bw^\top A \bw + \bb^{\top}\bw+c
\]
where $A$ is positive-definite (with a minimal eigenvalue bounded away from $0$). Moreover, to make the problem well-behaved, we assume that $A$ has a spectral norm of at most $1$, and that $\norm{\bb}\leq 1, |c|\leq 1$. We note that if the norms are bounded but larger than $1$, this can be easily handled by rescaling the function. It is easily seen that such functions are both strongly convex and smooth. Moreover, this is a natural and important class of functions, which in learning applications appears, for instance, in the context of least squares and ridge regression. Besides providing new insights for this class, we will use the techniques developed here later on, in the more general case of strongly-convex and smooth functions.

\subsection{Upper Bounds}

We begin by showing that for derivative-free SCO, one can obtain an optimization error bound of $\Ocal(d^2/T)$. To the best of our knowledge, this is the first example of a derivative-free stochastic bound scaling as $\Ocal(1/T)$ for a general class of nonlinear functions, as opposed to $\Ocal(1/\sqrt{T})$. However, to achieve this result, we need to make the following mild assumption:
\begin{assumption}\label{assump}
At least one of the following holds for some fixed $\epsilon\in (0,1]$:
 \begin{itemize}
    \item The quadratic function attains its minimum $\bw^*$ in the domain $\Wcal$, and the Euclidean distance of $\bw^*$ from the domain boundary is at least $\epsilon$.
    \item We can query not just points in $\Wcal$, but any point whose distance from $\Wcal$ is at most $\epsilon$.
  \end{itemize}
\end{assumption}
With strongly-convex functions, the most common case is that $\Wcal=\reals^d$, and then both cases actually hold for any value of $\epsilon$. Even in other situations, one of these assumptions virtually always holds. Note that we crucially rely here on the strong-convexity assumption: with (say) linear functions, the domain must always be bounded and the optimum always lies at the boundary of the domain.

With this assumption, the bound we obtain is on the order of $d^2/\epsilon^2 T$. As discussed earlier, \cite{JaNoRe12} recently proved a $\Omega(\sqrt{d/T})$ lower bound for derivative-free SCO, which actually applies to quadratic functions. This does not contradict our result, since in their example the diameter of $\Wcal$ (and hence also $\epsilon$) decays with $T$. In contrast, our $\Ocal(d^2/T)$ bound holds for fixed $\epsilon$, which we believe is natural in most applications.

To obtain this behavior, we utilize a well-known $1$-point gradient estimate technique, which allows us to get an unbiased estimate of the gradient at any point by randomly querying for a (noisy) value of the function around it (see \cite{YudNem83,FlaxKaMc05}). Our key insight is that whereas for general functions one must query very close to the point of interest (scaling to $0$ with $T$), quadratic functions have additional structure which allows us to query relatively far away, allowing gradient estimates with much smaller variance.

The algorithm we use is presented as \algref{alg:quadratic}, and is computationally efficient. It uses a modification $\bar{\Wcal}$ of the domain $\Wcal$, defined as follows. First, we let $B$ denote some known upper bound on $\norm{\bw^*}$. If the first alternative of assumption \ref{assump} holds, then $\bar{\Wcal}$ consists of all points in $\Wcal\cap \{\bw:\norm{\bw}\leq B\}$, whose distance from $\Wcal$'s boundary is at least $\epsilon$. If the second alternative holds, then $\bar{\Wcal}=\Wcal\cap \{\bw:\norm{\bw}\leq B\}$. Note that under any alternative, it holds that $\bar{\Wcal}$ is convex, that $\norm{\bw_t}\leq B$, that $\bw^*\in \bar{\Wcal}$, and that our algorithm always queries at legitimate points. In the pseudocode, we use $\Pi_{\bar{\Wcal}}$ to denote projection on $\bar{\Wcal}$. For simplicity, we assume that $T/2$ is an integer and that $\bar{\Wcal}$ includes the origin $\mathbf{0}$.

\begin{algorithm}
\caption{Derivative-Free SCO Algorithm for Strongly-Convex Quadratic Functions}
\label{alg:quadratic}
\begin{algorithmic}
\STATE Input: Strong convexity parameter $\lambda>0$; Distance parameter $\epsilon\in (0,1]$
\STATE Initialize $\bw_1=\mathbf{0}$.
\FOR{$t=1,\ldots,T-1$}
    \STATE Pick $\br\in \{-1,+1\}^d$ uniformly at random
    \STATE Query noisy function value $v$ at point $\bw_t+\frac{\epsilon}{\sqrt{d}}\br$
    \STATE Let $\tilde{\bg}=\frac{\sqrt{d}v}{\epsilon}\br$
    \STATE Let $\bw_{t+1} = \Pi_{\bar{\Wcal}}\left(\bw_{t}-\frac{1}{\lambda t}\tilde{\bg}\right)$
\ENDFOR
\STATE Return $\bar{\bw}_T=\frac{2}{T}\sum_{t=T/2}^{T}\bw_{t}$.
\end{algorithmic}
\end{algorithm}

The following theorem quantifies the optimization error of our algorithm.
\begin{theorem}\label{thm:quadup}
Let $F(\bw)= \bw^\top A \bw + \bb^{\top}\bw+c$ be a $\lambda$-strongly convex function, where $\norm{A}_2,\norm{\bb},|c|$ are all at most $1$, and suppose the optimum $\bw^*$ has a norm of at most $B$. Then under Assumption \ref{assump}, the point $\bar{\bw}_T$ returned by \algref{alg:quadratic} satisfies
\[
\E\left[F(\bar{\bw}_T)- F(\bw^*)\right] \leq \frac{4(4+5\log(2))(B+1)^4}{\lambda \epsilon^2} \frac{d^2}{T}.
\]
\end{theorem}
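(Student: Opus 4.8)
The plan is to analyze Algorithm~\ref{alg:quadratic} as stochastic gradient descent (SGD) on $F$ with a one-point gradient estimator, and to combine three ingredients: (i) the estimator $\tilde\bg$ is unbiased for the gradient $\nabla F(\bw_t)$ (conditioned on $\bw_t$); (ii) its second moment is bounded by roughly $\Ocal\bigl((B+1)^4 d^2/\epsilon^2\bigr)$; and (iii) the standard SGD-for-strongly-convex analysis with step size $1/(\lambda t)$ and suffix averaging $\bar\bw_T = \frac{2}{T}\sum_{t=T/2}^T \bw_t$ yields error $\Ocal\bigl(G^2/(\lambda T)\bigr)$ where $G^2$ is the second-moment bound on the gradient estimates. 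Composing these gives the claimed $\Ocal\bigl((B+1)^4 d^2/(\lambda\epsilon^2 T)\bigr)$.

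First I would establish unbiasedness. Writing $v = F(\bw_t + \tfrac{\epsilon}{\sqrt d}\br) + \xi$ where $\br$ is uniform on $\{\pm1\}^d$ and $\xi$ is the zero-mean query noise, I claim $\E_{\br,\xi}\bigl[\tfrac{\sqrt d}{\epsilon} v\, \br\bigr] = \nabla F(\bw_t)$. Because $\xi$ has zero mean and is independent of $\br$, the noise term drops out. For the function value, expand $F(\bw_t + \tfrac{\epsilon}{\sqrt d}\br)$ exactly using the quadratic form: the constant $c$ times $\br$ has zero mean; the quadratic term $\tfrac{\epsilon^2}{d}\br^\top A\br$ times $\br$ has zero mean coordinate-wise because each entry of $\br$ is an odd function of at least one independent sign; and the linear term $\tfrac{\epsilon}{\sqrt d}\bb^\top\br$ times $\tfrac{\sqrt d}{\epsilon}\br$ has expectation $\bb$ (since $\E[\br\br^\top] = I$). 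One also needs the piece coming from $\bw_t^\top A \bw_t$ and cross terms; here the key point — and the reason the quadratic structure helps — is that the residual bias terms vanish \emph{exactly} rather than being $\Ocal(\epsilon)$, so we may take $\epsilon$ fixed (not shrinking with $T$). Matching against $\nabla F(\bw_t) = 2A\bw_t + \bb$ confirms unbiasedness. I would also need to note $\bw_t + \tfrac{\epsilon}{\sqrt d}\br \in \Wcal$ (or within distance $\epsilon$ of it), which holds by construction of $\bar\Wcal$ and Assumption~\ref{assump}.

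Next, the second-moment bound: $\E\|\tilde\bg\|^2 = \tfrac{d}{\epsilon^2}\,\E\bigl[v^2 \|\br\|^2\bigr] = \tfrac{d^2}{\epsilon^2}\,\E[v^2]$ since $\|\br\|^2 = d$. Then $\E[v^2] = \E\bigl[F(\bw_t+\tfrac{\epsilon}{\sqrt d}\br)^2\bigr] + \E[\xi^2]$; the function value at the query point is bounded in absolute value by something like $(B+1)^2$ using $\norm{A}_2,\norm{\bb},|c|\le 1$, $\norm{\bw_t}\le B$, and $\epsilon\le1$, while $\E[\xi^2]\le\max\{1,\norm{\bw_t+\tfrac\epsilon{\sqrt d}\br}^2\}\le(B+1)^2$. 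So $\E\|\tilde\bg\|^2 \le \Ocal\bigl((B+1)^4\bigr)\tfrac{d^2}{\epsilon^2}$, with the explicit constant to be tracked to land the stated coefficient. Finally I would invoke (or reprove in an appendix) the known result that SGD on a $\lambda$-strongly-convex $F$ over a convex set, with $\bw_{t+1}=\Pi(\bw_t - \tfrac1{\lambda t}\tilde\bg_t)$ and suffix averaging over the last half, satisfies $\E[F(\bar\bw_T)-F(\bw^*)] \le \Ocal\bigl(\tfrac{G^2\log(\cdot)}{\lambda T}\bigr)$ where $G^2 = \sup_t \E\|\tilde\bg_t\|^2$; substituting the variance bound and chasing constants yields the $4(4+5\log 2)(B+1)^4/(\lambda\epsilon^2)\cdot d^2/T$ figure.

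The main obstacle I expect is the exact cancellation of the bias in step (i): one must verify carefully, using the full quadratic expansion around a \emph{moving} iterate $\bw_t$ (not the origin), that every term of the form (even power of $\br$)$\times\br$ has zero mean — in particular that cross-terms between $\bw_t$ and the perturbation, and the $\br^\top A\br$ term, contribute nothing, which relies on $\E[r_i r_j r_k]=0$ and $\E[r_i^2 r_j]=0$ for the Rademacher vector. This is what distinguishes quadratics from general smooth functions and is the conceptual heart of the theorem; the rest is bookkeeping on constants and an appeal to standard strongly-convex SGD rates.
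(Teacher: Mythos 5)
Your proposal follows essentially the same route as the paper: an unbiasedness-plus-second-moment lemma for the one-point estimator $\tilde{\bg}$ (proved via the Rademacher moment identities $\E[r_ir_j]=\mathbf{1}_{i=j}$, $\E[r_ir_jr_k]=0$, giving exact cancellation so $\epsilon$ can stay fixed, and the bound $\E\|\tilde{\bg}\|^2\leq 4d^2(B+1)^4/\epsilon^2$), combined with the suffix-averaging SGD guarantee of Rakhlin--Shamir--Sridharan with step size $1/(\lambda t)$, which is exactly how the paper derives the $4(4+5\log 2)(B+1)^4 d^2/(\lambda\epsilon^2 T)$ bound. No gaps of substance; the remaining work you defer (constant tracking, legitimacy of queries via $\bar{\Wcal}$ and Assumption~\ref{assump}) matches what the paper actually does.
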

Note that returning $\bar{\bw}_T$ as the average over the last $T/2$ iterates (as opposed to averaging over all iterates) is necessary to avoid $\log(T)$ factors \cite{RakhShaSri12}.

As an interesting side-note, we conjecture that a gradient-based approach is crucial here to obtain $\Ocal(1/T)$ rates (in terms of $T$). For example, a different family of derivative-free methods (see for instance \cite{YudNem83,AgFoHsKaRa11,JaNoRe12}) is based on a type of noisy binary search, where a few strategically selected points are repeatedly sampled in order to estimate which of them has a larger/smaller function value. This is used to shrink the feasible region where the optimum $\bw^*$ might lie. Since it is generally impossible to estimate the mean of noisy function values at a rate better than $\Ocal(1/\sqrt{T})$, it is not clear if one can get an optimization rate faster than $\Ocal(1/\sqrt{T})$ with such methods.

The proof of the theorem relies on the following key lemma, whose proof appears in the appendix.
\begin{lemma}\label{lem:momentbounds}
For any $\bw_t$, we have that
\[
\E_{\br,v}[\tilde{\bg}] = \nabla F(\bw_t)
\]
and
\[
\E_{\br,v}[\norm{\tilde{\bg}}^2]\leq \frac{4d^2(B+1)^4}{\epsilon^2}.
\]
\end{lemma}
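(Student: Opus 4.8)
The plan is to exploit the special structure of quadratic functions, namely that the one-point gradient estimate at a point $\bw_t$ is exact in expectation (not merely approximate), and that its second moment can be bounded without the usual blow-up coming from dividing by a small perturbation radius. Write $\bw = \bw_t$ and $\delta = \epsilon/\sqrt{d}$, so that the query point is $\bw + \delta\br$ with $\br$ uniform on $\{-1,+1\}^d$, and $\tilde{\bg} = (v/\delta)\br$ where $v = F(\bw+\delta\br) + \xi$ is the noisy observation. Since $\xi$ is zero-mean and independent of $\br$, I will first take the expectation over the noise conditioned on $\br$, reducing the first-moment computation to $\E_{\br}[(F(\bw+\delta\br)/\delta)\br]$.

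For the unbiasedness claim, expand $F(\bw+\delta\br) = (\bw+\delta\br)^\top A(\bw+\delta\br) + \bb^\top(\bw+\delta\br) + c$. Collecting terms, this equals $F(\bw) + \delta\,\br^\top(2A\bw + \bb) + \delta^2\,\br^\top A\br$. Multiplying by $\br/\delta$ and taking $\E_{\br}$: the constant term $F(\bw)$ contributes $\E_{\br}[\br]/\delta = \mathbf{0}$; the linear term contributes $\E_{\br}[\br\br^\top](2A\bw+\bb) = 2A\bw + \bb$ since $\E_{\br}[\br\br^\top] = I$ for Rademacher coordinates; and the quadratic term contributes $\delta\,\E_{\br}[\br\cdot(\br^\top A\br)]$, which vanishes because $\br^\top A\br$ is even in each coordinate while $\br$ is odd (equivalently, every monomial $r_i r_j r_k$ has odd total degree in at least one variable, so its expectation is zero). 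Hence $\E_{\br,v}[\tilde{\bg}] = 2A\bw+\bb = \nabla F(\bw)$.

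For the second-moment bound, note $\norm{\tilde{\bg}}^2 = (v^2/\delta^2)\norm{\br}^2 = (v^2/\delta^2)\, d = v^2 d^2/\epsilon^2$, using $\norm{\br}^2 = d$. So it suffices to bound $\E[v^2]$. Write $v^2 \le 2 F(\bw+\delta\br)^2 + 2\xi^2$ (or expand $(F+\xi)^2$ and use $\E[\xi F]=0$). The noise term is controlled by the variance assumption $\E[\xi^2] \le \max\{1, \norm{\bw+\delta\br}^2\}$, and since $\norm{\bw}\le B$ and $\norm{\delta\br} = \epsilon \le 1$ this is at most $(B+1)^2$. For the function value, since $\norm{A}_2 \le 1$, $\norm{\bb}\le 1$, $|c|\le 1$, and $\norm{\bw+\delta\br}\le B+1$, each of the three terms is bounded by $(B+1)^2$, giving $|F(\bw+\delta\br)| \le 3(B+1)^2$ and so $F(\bw+\delta\br)^2 \le 9(B+1)^4$; after bookkeeping one gets $\E[v^2] \le C(B+1)^4$ for a small absolute constant, and I expect the constants to be arranged so that the final bound reads $\E[\norm{\tilde{\bg}}^2] \le 4d^2(B+1)^4/\epsilon^2$ as stated (likely using $\norm{\tilde{\bg}}^2 = v^2 d^2/\epsilon^2$ together with a clean bound $\E[v^2]\le 4(B+1)^4$, which follows since $|F|\le (B+1)^2\cdot\text{(something} \le \sqrt{2})$ after a slightly sharper grouping, or simply by absorbing constants).

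The main obstacle is not conceptual but arithmetic: getting the constant in the second-moment bound down to exactly $4$ requires being careful about how the noise variance and the squared function value are combined — e.g., using $\E[v^2] = \E[F^2] + \E[\xi^2]$ rather than a lossy $2a^2+2b^2$ split, and bounding $|F(\bw+\delta\br)|$ by $(B+1)^2$ directly (since $\bw^\top A\bw + \bb^\top\bw + c$ at a point of norm $\le B+1$ with all coefficient norms $\le 1$ is at most, say, $(B+1)^2 + (B+1) + 1 \le$ a small multiple of $(B+1)^2$, and one must check $(B+1)^2 + (B+1) + 1 \le \frac{3}{2}(B+1)^2$ or similar to close the gap). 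Everything else — the parity argument for unbiasedness and the identity $\norm{\br}^2 = d$ — is immediate.
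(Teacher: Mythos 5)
Your proposal follows essentially the same route as the paper's proof: unbiasedness via the Rademacher moment identities $\E[r_i r_j]=\mathbf{1}_{i=j}$ and $\E[r_i r_j r_k]=0$ applied to the quadratic expansion, and the second moment via $\norm{\br}^2=d$ together with bounding $\E[v^2]$ by the squared function value on the ball of radius $B+\epsilon$ plus the noise variance $\max\{1,\norm{\bw_t+\tfrac{\epsilon}{\sqrt{d}}\br}^2\}\le (B+1)^2$. The constant-chasing concern you raise is legitimate but applies equally to the paper itself, whose final step bounding $2\bigl(\bigl((B+\epsilon)^2+(B+\epsilon)+1\bigr)^2+(B+1)^2\bigr)$ by $4(B+1)^4$ is only valid for moderately large $B$; up to the value of that absolute constant, your argument is the paper's.
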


This lemma implies that \algref{alg:quadratic} essentially performs stochastic gradient descent over the strongly-convex function $F(\bw)$, where the gradient estimates are unbiased and with bounded second moments. The returned point is a suffix-average of the last $T/2$ iterates. Using a convergence analysis for stochastic gradient descent with suffix-averaging \cite[Theorem 5]{RakhShaSri12}, and plugging in the bounds of \lemref{lem:momentbounds}, we get \thmref{thm:quadup}.

\subsection{Lower Bounds}

In this subsection, we prove that the upper bound obtained in \thmref{thm:quadup} is essentially tight: namely, up to constants, the worst-case error rate one can obtain for derivative-free SCO of quadratic functions is order of $d^2/T$. Besides showing that the algorithm above is essentially optimal, it implies that even for extremely nice strongly-convex functions and domains, the number of queries required to reach some fixed accuracy scales \emph{quadratically} with the dimension $d$. This stands in contrast to the case of linear functions, where the provable query complexity often scales linearly with $d$.

\begin{theorem}\label{thm:quadlow}
Let the number of rounds $T$ be fixed. Then for any (possibly randomized) querying strategy, there exists a quadratic function of the form $F(\bw)=\frac{1}{2}\norm{\bw}^2-\inner{\be,\bw}$, which is minimized at $\be$ where $\norm{\be}\leq 1$,  such that the resulting $\bar{\bw}_T$ satisfies
\[
\E[F(\bar{\bw}_T)-F(\bw^*)] \geq 0.01 \min \left\{1,\frac{d^2}{T}\right\}.
\]
\end{theorem}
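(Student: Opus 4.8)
The plan is to prove this information-theoretic lower bound via a reduction to estimating a hidden bias vector $\be$ under the constraint that we only observe noisy function values. First, I would restrict attention to the family $\{F_\be(\bw) = \tfrac12\norm{\bw}^2 - \inner{\be,\bw} : \be \in \{-\rho,+\rho\}^d\}$ for a carefully chosen scale $\rho$ (with $\rho\sqrt{d}\le 1$ so $\norm{\be}\le 1$). Each such function is minimized at $\bw^* = \be$, and $F_\be(\bw) - F_\be(\be) = \tfrac12\norm{\bw - \be}^2$. Hence the optimization error of $\bar\bw_T$ equals $\tfrac12\norm{\bar\bw_T - \be}^2 \ge \tfrac12\sum_{i=1}^d (\bar w_{T,i} - e_i)^2$, so a lower bound reduces to showing that \emph{on average over a uniformly random $\be$}, the learner cannot identify the sign of each coordinate $e_i$ well enough — i.e., for a constant fraction of coordinates, $|\bar w_{T,i} - e_i| = \Omega(\rho)$ with constant probability. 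I would formalize this by bounding, for each coordinate $i$, how much information the $T$ queries reveal about $\sign(e_i)$.

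Second, the core estimate is a per-coordinate KL / Pinsker argument. Fix the randomness of everything except coordinate $i$ of $\be$, and compare the two worlds $e_i = +\rho$ versus $e_i = -\rho$. In a single query at point $\bw_t$, the observed value is $F_\be(\bw_t) + \xi$; the only dependence on $e_i$ enters through the term $-e_i w_{t,i}$, a shift of the mean by $2\rho|w_{t,i}|$ between the two worlds. Choosing the noise $\xi$ to be (say) Gaussian with variance of order $\max\{1,\norm{\bw_t}^2\}$ as permitted by the model, the KL divergence contributed by query $t$ is of order $\rho^2 w_{t,i}^2 / \max\{1,\norm{\bw_t}^2\}$. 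Summing over $t=1,\ldots,T$ and over $i=1,\ldots,d$, and using $\sum_i w_{t,i}^2 = \norm{\bw_t}^2 \le \max\{1,\norm{\bw_t}^2\}$, the \emph{total} KL divergence across all coordinates is $\Ocal(\rho^2 T)$. By a standard averaging/Assouad-type argument, if $\rho^2 T = \Ocal(d)$, then for a constant fraction of coordinates the two worlds are statistically indistinguishable (total variation bounded away from $1$), so the learner's sign guess is wrong with probability $\ge$ constant, forcing $\E\,\tfrac12\norm{\bar\bw_T - \be}^2 \ge c\, d\rho^2$. Taking $\rho^2 = \Theta(\min\{1/d,\, 1/T\})$ (respecting both $\rho\sqrt d \le 1$ and the regime split) yields the claimed $\Omega(\min\{1, d^2/T\})$ after tracking constants, with the two cases $T \le d^2$ and $T > d^2$ handled by the two branches of the minimum.

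For the formal Assouad step I would introduce, for each coordinate $i$, the ``swap'' comparison and the quantity $\E_\be\!\big[(\bar w_{T,i} - e_i)^2\big]$, lower-bounding it by $\rho^2 \cdot \Pr(\text{wrong sign})$ and then lower-bounding the error probability by $\tfrac12(1 - \mathrm{TV})$ where $\mathrm{TV}$ is the total variation between the query-and-observation distributions under $e_i = \pm\rho$ with all other coordinates averaged out. Pinsker plus the chain rule / convexity of KL bounds the average-over-other-coordinates $\mathrm{TV}^2$ by the average KL, which is the $\Ocal(\rho^2 T / d)$ per-coordinate budget derived above; for a constant fraction of $i$ this is below $\tfrac14$, giving error probability $\ge \tfrac14$ there.

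The main obstacle I anticipate is the noise-model bookkeeping: because the variance constraint is $\E[\xi_\bw^2] \le \max\{1,\norm{\bw}^2\}$ rather than a uniform constant, the learner could in principle reduce relative noise by querying at large $\norm{\bw_t}$, and one must verify that this does not help — which is exactly why the per-query KL is $\rho^2 w_{t,i}^2/\max\{1,\norm{\bw_t}^2\}$ and the sum telescopes correctly regardless of the query magnitudes. A secondary subtlety is that the learner is adaptive (query $\bw_t$ depends on past observations) and possibly randomized, so the KL/information bound must be stated for the whole interaction transcript and the chain rule applied along the rounds; handling this carefully (conditioning on the past, so that each round contributes its conditional KL) is the place where the argument needs to be precise rather than heuristic. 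The regime split at $T \asymp d^2$ and the extraction of the explicit constant $0.01$ are then routine.
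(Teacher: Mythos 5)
Your proposal is correct and follows essentially the same route as the paper's proof: the same family $F_\be(\bw)=\tfrac12\norm{\bw}^2-\inner{\be,\bw}$ with $\be$ uniform on a scaled hypercube, the reduction via strong convexity to per-coordinate sign identification, an Assouad-type bound combining Pinsker, the chain rule over the adaptive transcript, and the Gaussian KL formula, with the per-query KL of order $\mu^2 w_{t,i}^2/\max\{1,\norm{\bw_t}^2\}$ summing over coordinates to show that large-norm queries do not help, and the same choice $\mu=\Theta(\min\{1/\sqrt{d},\sqrt{d/T}\})$. The only cosmetic difference is your choice of noise variance $\max\{1,\norm{\bw}^2\}$ versus the paper's Gaussian with standard deviation $\max\{1,\norm{\bw}^2\}$, which changes nothing essential in the telescoping.
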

Note that since $\norm{\be}\leq 1$, we know in advance that the optimum must lie in the unit Euclidean ball. Despite this, the lower bound holds even if we do not restrict at all the domain in which we are allowed to query - i.e., it can even be all of $\reals^d$.

\begin{proof}
The proof technique is inspired by a lower bound which appears in \cite{arcandav11}, in the different context of compressed sensing. The argument also bears some close similarities to the proof of Assouad's lemma (see \cite{cybakov2009introduction}).

We will exhibit a distribution over quadratic functions $F$, such that in expectation over this distribution, any querying strategy will attain $\Omega(d^2/T)$ optimization error. This implies that for any querying strategy, there exists some deterministic $F$ for which it will have this amount of error.

The functions we shall consider are
\[
F_\be(\bw) = \frac{1}{2}\norm{\bw}^2-\inner{\be,\bw},
\]
where $\be$ is drawn uniformly from $\left\{-\mu,\mu\right\}^d$, with $\mu\in (0,1/\sqrt{d})$ being a parameter to be specified later. Moreover, we will assume that the noise $\xi_{\bw}$ is a Gaussian random variable with zero mean and standard deviation $\max\left\{1,\norm{\bw}^2\right\}$.

By definition of $1$-strong convexity, it is easy to verify that $F_{\be}(\bw)-F_{\be}(\be)\geq \frac{1}{2}\norm{\bw-\be}^2$. Thus, the expected optimization error (over the querying strategy) is at least
\begin{equation}\label{eq:subopt1}
\E[F_{\be}(\bar{\bw}_T)-F_{\be}(\be)] \geq \E\left[\frac{1}{2}\norm{\bar{\bw}_T-\be}^2\right] \geq \E\left[\frac{1}{2}\sum_{i=1}^{d}(\bar{w}_{i}-e_i)^2\right] \geq \E\left[\frac{\mu^2}{2}\sum_{i=1}^{d}\mathbf{1}_{\bar{w_i}e_i < 0}\right].
\end{equation}
We will assume that the querying strategy is deterministic: $\bw_t$ is a deterministic function of the previous query values $v_1,v_2,\ldots,v_{t-1}$ at $\bw_1,\ldots,\bw_{t-1}$. This assumption is without loss of generality, since any random querying strategy can be seen as a randomization over deterministic querying strategy. Thus, a lower bound which holds uniformly for any deterministic querying strategy would also hold over a randomization.

To lower bound \eqref{eq:subopt1}, we use the following key lemma, which relates this to the question of how informative are the query values (as measured by Kullback-Leibler or KL divergence) for determining the sign of $\be$'s coordinates. Intuitively, the more similar the query values are, the smaller is the KL divergence and the harder it is to distinguish the true sign of each $e_i$, leading to a larger lower bound. The proof appears in the appendix.

\begin{lemma}\label{lem:probdkl}
Let $\be$ be a random vector, none of whose coordinates is supported on $0$, and let $v_1,v_2,\ldots,v_T$ be a sequence of query values obtained by a deterministic strategy returning a point $\bar{\bw}_T$ (so that the query location $\bw_t$ is a deterministic function of $v_1,\ldots,v_{t-1}$, and $\bar{\bw}_T$ is a deterministic function of $v_1,\ldots,v_T$). Then we have
\[
\E\left[\sum_{i=1}^{d}\mathbf{1}_{\bar{w}_i e_i < 0}\right]
~\geq~
\frac{d}{2}\left(1-\sqrt{\frac{1}{d}\sum_{i=1}^{d}\sum_{t=1}^{T}U_{t,i}}\right),
\]
where
\[
U_{t,i} = \sup_{\{e_j\}_{j\neq i}}
D_{kl}\left(\Pr\left(v_t|e_i>0,\{e_j\}_{j\neq i},\{v_l\}_{l=1}^{{t-1}}\right) ~||~
\Pr\left(v_t|e_i<0,\{e_j\}_{j\neq i},\{v_l\}_{l=1}^{{t-1}}\right)
\right)
\]
and $D_{kl}$ represents the KL divergence between two distributions.
\end{lemma}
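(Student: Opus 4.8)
The plan is to reduce the quantity $\E[\sum_i \mathbf{1}_{\bar w_i e_i < 0}]$ to a sum of per-coordinate testing errors, and then bound each testing error by a KL divergence via Pinsker's inequality, following the Assouad-type scheme. First, since $\be$ is drawn with independent coordinates each supported off $0$, I would write $\E[\sum_{i=1}^d \mathbf{1}_{\bar w_i e_i < 0}] = \sum_{i=1}^d \Pr(\bar w_i e_i < 0)$; the event $\bar w_i e_i<0$ is exactly the event that the sign-test $\hat s_i := \sign(\bar w_i)$ (a deterministic function of $v_1,\ldots,v_T$) disagrees with $\sign(e_i)$. So for each $i$, $\Pr(\bar w_i e_i<0) = \Pr(\hat s_i \ne \sign(e_i))$, which is the Bayes-type error of testing the hypothesis $e_i>0$ against $e_i<0$ from the observation vector $(v_1,\ldots,v_T)$. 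A standard lower bound on such testing error gives $\Pr(\hat s_i\ne\sign(e_i)) \ge \frac12\bigl(1 - \|P^{i,+} - P^{i,-}\|_{TV}\bigr)$, where $P^{i,\pm}$ denotes the distribution of $(v_1,\ldots,v_T)$ conditioned on $\sign(e_i)=\pm$ (and marginalized over $\{e_j\}_{j\ne i}$). Summing over $i$ yields $\E[\sum_i \mathbf{1}_{\bar w_i e_i<0}] \ge \frac{d}{2} - \frac12\sum_{i=1}^d \|P^{i,+}-P^{i,-}\|_{TV}$.

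Next I would convert total variation to KL. By Pinsker, $\|P^{i,+}-P^{i,-}\|_{TV} \le \sqrt{\tfrac12 D_{kl}(P^{i,+}\|P^{i,-})}$, and then by concavity of $\sqrt{\cdot}$ (Jensen over $i$), $\frac1d\sum_i \|P^{i,+}-P^{i,-}\|_{TV} \le \sqrt{\tfrac{1}{2d}\sum_i D_{kl}(P^{i,+}\|P^{i,-})}$. This already produces the shape of the claimed bound, so the remaining work is to show $\frac12\sum_i D_{kl}(P^{i,+}\|P^{i,-}) \le \sum_i\sum_t U_{t,i}$, i.e.\ that the joint KL divergence over the whole transcript decomposes (and is dominated) by the sum of the per-round conditional KL divergences $U_{t,i}$. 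For this I would use the chain rule for KL divergence over the sequence $v_1,\ldots,v_T$: $D_{kl}(P^{i,+}\|P^{i,-}) = \sum_{t=1}^T \E_{v_1,\ldots,v_{t-1}\sim P^{i,+}}\bigl[ D_{kl}\bigl(\Pr(v_t\mid \sign(e_i)=+,\{v_l\}_{l<t}) \,\|\, \Pr(v_t\mid \sign(e_i)=-,\{v_l\}_{l<t})\bigr)\bigr]$. Crucially, because the querying strategy is deterministic, $\bw_t$ is determined by $v_1,\ldots,v_{t-1}$, so conditioning on the past query values is the same as conditioning on the query location; and each conditional one-step divergence, after also averaging over $\{e_j\}_{j\ne i}$, is bounded pointwise by its supremum over $\{e_j\}_{j\ne i}$, which is precisely $U_{t,i}$ (here I also need that conditioning on $\{e_j\}_{j\ne i}$ only increases, or at least is handled by, the divergence — concretely, $D_{kl}$ of mixtures is at most the average of the component $D_{kl}$'s, so the marginal-over-$\{e_j\}$ divergence is $\le \sup_{\{e_j\}} (\text{component divergence}) = U_{t,i}$). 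Combining the chain rule with this pointwise bound gives $D_{kl}(P^{i,+}\|P^{i,-}) \le \sum_{t=1}^T U_{t,i}$, and in fact the factor of $\tfrac12$ from Pinsker leaves room to spare, matching the stated inequality $\frac{d}{2}(1 - \sqrt{\frac1d\sum_i\sum_t U_{t,i}})$.

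The main obstacle I anticipate is the bookkeeping in the chain-rule step: carefully justifying that conditioning on the transcript prefix $\{v_l\}_{l<t}$ is equivalent to conditioning on the adaptive query point $\bw_t$ (this is where determinism of the strategy is essential), and that marginalizing over the nuisance coordinates $\{e_j\}_{j\ne i}$ can be absorbed by the supremum defining $U_{t,i}$ rather than blowing it up — this uses convexity of KL divergence in its pair of arguments. A secondary subtlety is that the testing-error-to-TV step requires the two conditional priors on $e_i$ to be symmetric (equal mass on $+$ and $-$), which holds here because $\be$ has i.i.d.\ symmetric coordinates; I would state this reduction for a generic symmetric coordinate distribution so it applies to the $\{-\mu,\mu\}^d$ case used in Theorem~\ref{thm:quadlow}. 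Once these two points are nailed down, assembling the pieces is routine.
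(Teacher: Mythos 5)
Your outline follows the same Assouad-type route as the paper (reduce to per-coordinate sign tests, testing error $\ge \tfrac12(1-\mathrm{TV})$, Pinsker, Jensen over coordinates, chain rule over rounds), but there is a genuine gap in the one step where you depart from it: the treatment of the nuisance coordinates $\{e_j\}_{j\neq i}$. You marginalize them out first, forming the mixture transcript laws $P^{i,\pm}$, and then, inside the chain rule, you bound each one-step divergence $D_{kl}\bigl(P^{i,+}(v_t\mid v_1,\ldots,v_{t-1})\,\|\,P^{i,-}(v_t\mid v_1,\ldots,v_{t-1})\bigr)$ by $\sup_{\{e_j\}_{j\neq i}}$ of the component divergences via ``KL of mixtures is at most the average of component KLs''. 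Joint convexity of KL gives this only when the two mixtures carry the \emph{same} mixing weights; here the weights are the posteriors $\Pr(\{e_j\}_{j\neq i}\mid e_i>0,v_1,\ldots,v_{t-1})$ and $\Pr(\{e_j\}_{j\neq i}\mid e_i<0,v_1,\ldots,v_{t-1})$, which in general differ for $t\ge 2$ because the adaptive transcript couples $e_i$ with the other coordinates. The per-round claim can in fact fail outright: if at round $t$ the law of $v_t$ given $(\be,v_1,\ldots,v_{t-1})$ does not depend on $e_i$ (so every component divergence, hence $U_{t,i}$, is zero) but does depend on $\{e_j\}_{j\neq i}$, the two posterior-weighted mixtures can still differ, making the conditional mixture KL strictly positive.

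The fix is to handle $\{e_j\}_{j\neq i}$ \emph{before} any per-round KL step, exploiting that $\{e_j\}_{j\neq i}$ has the same law under both hypotheses on $\sign(e_i)$. The paper does this at the probability level: it writes $\Pr(\bar w_i>0\mid e_i>0)-\Pr(\bar w_i>0\mid e_i<0)$ as an average over $\{e_j\}_{j\neq i}$ with the common prior weights, applies Jensen to the square, takes a supremum, and only then applies Pinsker and the chain rule to the transcript laws with $\{e_j\}_{j\neq i}$ held fixed, so each one-step term is exactly of the form appearing in $U_{t,i}$. Equivalently, within your formulation, bound $D_{kl}(P^{i,+}\|P^{i,-})\le \E_{\{e_j\}_{j\neq i}}\bigl[D_{kl}\bigl(\Pr(v_1,\ldots,v_T\mid e_i>0,\{e_j\}_{j\neq i})\,\|\,\Pr(v_1,\ldots,v_T\mid e_i<0,\{e_j\}_{j\neq i})\bigr)\bigr]$ --- valid because augmenting the transcript with $\{e_j\}_{j\neq i}$, whose marginal is identical under both hypotheses, only increases the divergence --- and apply the chain rule for fixed $\{e_j\}_{j\neq i}$. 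With that reordering (and the symmetry $\Pr(e_i>0)=\Pr(e_i<0)=\tfrac12$, which you already flag and which the application satisfies), your argument goes through and yields the stated bound.
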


Using \lemref{lem:probdkl}, we can get a lower bound for the above, provided an upper bound on the $U_{t,i}$'s. To analyze this, consider any fixed values of $\{e_j\}_{j\neq i}$, and any fixed values of $v_1,\ldots,v_{t-1}$. Since the querying strategy is assumed to be deterministic, it follows that $\bw_t$ is uniquely determined. Given this $\bw_t$, the function value $v_t$ equals
\begin{equation}\label{eq:P}
F_{\be}(\bw_t) = \left(\frac{1}{2}\norm{\bw_t}^2+\sum_{j\neq i}e_j w_{t,j}\right)+\mu w_{t,i}+\xi_{\bw_t}
\end{equation}
conditioned on $e_i>0$, and
\begin{equation}\label{eq:Q}
F_{\be}(\bw_t) = \left(\frac{1}{2}\norm{\bw_t}^2+\sum_{j\neq i}e_j w_{t,j}\right)-\mu w_{t,i}+\xi_{\bw_t}
\end{equation}
conditioned on $e_i<0$.
Comparing \eqref{eq:P} and \eqref{eq:Q}, we notice that they both represent a Gaussian distribution (due to the $\xi_{\bw_t}$ noise term), with standard deviation $\max\left\{1,\norm{\bw_t}^2\right\}$ and means seperated by $2\mu w_{t,i}$. To bound the divergence, we use the following standard result on the KL divergence between two Gaussians \cite{kullback59}:

\begin{lemma}\label{lem:gaussians}
Let $\Ncal(\mu,\sigma^2)$ represent a Gaussian distribution variable with mean $\mu$ and variance $\sigma^2$. Then
\[
D_{kl}\left(\Ncal(\mu_1,\sigma^2)||\Ncal(\mu_2,\sigma^2)\right)
~=~ \frac{(\mu_1-\mu_2)^2}{2\sigma^2}
\]
\end{lemma}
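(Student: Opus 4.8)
The plan is to prove Lemma~\ref{lem:gaussians} by direct computation from the definition of KL divergence, exploiting the fact that the two Gaussians share the same variance $\sigma^2$, which makes the quadratic-in-$x$ terms cancel and leaves only a linear function of $x$ inside the expectation.

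Concretely, write $p$ for the density of $\Ncal(\mu_1,\sigma^2)$ and $q$ for the density of $\Ncal(\mu_2,\sigma^2)$. First I would form the log-density ratio: since both densities have the same normalizing constant $(2\pi\sigma^2)^{-1/2}$, that constant cancels, and
\[
\log\frac{p(x)}{q(x)} = \frac{1}{2\sigma^2}\left((x-\mu_2)^2-(x-\mu_1)^2\right) = \frac{1}{2\sigma^2}\left(2(\mu_1-\mu_2)x + \mu_2^2 - \mu_1^2\right).
\]
The key simplification here is that the $x^2$ terms cancel precisely because the variances agree; this is exactly where the same-$\sigma$ hypothesis is used.

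Next, I would take the expectation of this expression with respect to $p$, using $D_{kl}(p\|q)=\E_{x\sim p}[\log(p(x)/q(x))]$ and the fact that $\E_{x\sim p}[x]=\mu_1$. This gives
\[
D_{kl}(p\|q) = \frac{1}{2\sigma^2}\left(2(\mu_1-\mu_2)\mu_1 + \mu_2^2 - \mu_1^2\right) = \frac{1}{2\sigma^2}\left(\mu_1^2 - 2\mu_1\mu_2 + \mu_2^2\right) = \frac{(\mu_1-\mu_2)^2}{2\sigma^2},
\]
which is the claimed identity.

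There is no real obstacle here: the statement is a classical fact and the computation is a two-line algebraic simplification. The only point worth a moment's care is bookkeeping — making sure the sign inside the log-ratio is taken in the correct order (numerator $p$, denominator $q$) and that the linear term is integrated against $p$ rather than $q$ — after which the cross terms combine into a perfect square. As an alternative one could simply cite the general formula for the KL divergence between two multivariate Gaussians and specialize to equal covariance $\sigma^2 I$ in one dimension, but the direct computation above is self-contained and shorter.
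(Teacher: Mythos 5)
Your computation is correct: the log-density ratio is linear in $x$ because the equal variances cancel the quadratic terms, and integrating against $\Ncal(\mu_1,\sigma^2)$ gives exactly $\frac{(\mu_1-\mu_2)^2}{2\sigma^2}$. The paper itself does not prove this lemma at all --- it simply cites it as a standard fact from Kullback's book --- so your short self-contained derivation is, if anything, more complete than what the paper offers, and it is the standard argument one would write down.
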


Using this lemma, it follows that
\begin{align*}
& D_{kl}\left(P(v_t|v_{1},\ldots,v_{t-1})||Q(v_t|v_1,\ldots,v_{t-1})\right)
~\leq~
\frac{(2\mu w_{t,i})^2}{2\max\left\{1,\norm{\bw_t}^4\right\}}
~=~
\frac{2\mu^2 w_{t,i}^2}{\max\left\{1,\norm{\bw_t}^4\right\}}.
\end{align*}
Plugging this upper bound on the $U_{t,i}$'s in \lemref{lem:probdkl}, we can further lower bound on the expected optimization error from  \eqref{eq:subopt1} by
\begin{align}
&\frac{d\mu^2}{4}\left(1-\sqrt{\frac{1}{d}\sum_{t=1}^{T}\sum_{i=1}^{d}
\frac{2\mu^2 w_{t,i}^2}{\max\left\{1,\norm{\bw_t}^4\right\}}}\right)
~=~
\frac{d\mu^2}{4}\left(1-\sqrt{\frac{2\mu^2}{d}\sum_{t=1}^{T}
\frac{ \norm{\bw}_t^2}{\max\left\{1,\norm{\bw_t}^4\right\}}}\right)\notag\\
&=~
\frac{d\mu^2}{4}\left(1-\sqrt{\frac{2\mu^2}{d}\sum_{t=1}^{T}
\min\left\{\norm{\bw_t}^2,\frac{1}{\norm{\bw_t}^2}\right\}}\right)
~\geq~
\frac{d\mu^2}{4}\left(1-\sqrt{\frac{2T\mu^2}{d}}\right)\label{eq:quadloweq}.
\end{align}
Finally, we choose $\mu=\min\{1/\sqrt{d},\sqrt{d/4T}\}$, and obtain a lower bound of
\[
\frac{1}{4}\left(1-\frac{1}{\sqrt{2}}\right)\min\left\{1,\frac{d^2}{4T}\right\}
 > 0.01 \min\left\{1,\frac{d^2}{T}\right\}
\]
as required.
\end{proof}

The theorem above applies to the optimization error for derivative-free SCO. We now turn to deal with the case of bandit SCO and regret, showing an $\Omega(\sqrt{d^2/T})$ lower bound. Since the derivative-free SCO bound was $\Theta(d^2/T)$, the result implies a real gap between what can be obtained in terms of average regret, as opposed to optimization error, without any specific distributional assumptions. This stands in contrast to settings such as multi-armed bandits, where the construction implying the known $\Omega(\sqrt{d/T})$ lower bound (e.g. \cite{CesaBianchiLu06}) applies equally well to derivative-free and bandit SCO (see \cite{bubeck2011pure}).

\begin{theorem}\label{thm:quadlowregret}
Let the number of rounds $T$ be fixed. Then for any (possibly randomized) querying strategy, there exists a quadratic function of the form $F(\bw)=\frac{1}{2}\norm{\bw}^2-\inner{\be,\bw}$, which is minimized at $\be$ where $\norm{\be}\leq 1/2$, such that
\[
\E\left[\frac{1}{T}\sum_{t=1}^{T} F(\bw_t)-F(\bw^*)\right] \geq 0.02\min\left\{1,\sqrt{\frac{d^2}{T}}\right\}.
\]
\end{theorem}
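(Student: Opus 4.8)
The plan is to adapt the same distributional construction as in \thmref{thm:quadlow}, but to track regret $\frac{1}{T}\sum_{t=1}^T (F_\be(\bw_t)-F_\be(\be))$ rather than the error of a single returned point $\bar{\bw}_T$. First I would fix the same family $F_\be(\bw)=\frac12\norm{\bw}^2-\inner{\be,\bw}$ with $\be$ uniform on $\{-\mu,\mu\}^d$ and the same Gaussian noise of standard deviation $\max\{1,\norm{\bw}^2\}$, but now take $\mu\in(0,1/(2\sqrt d))$ so that $\norm{\be}\le 1/2$ as the statement requires. By $1$-strong convexity, for each round $t$ we have $F_\be(\bw_t)-F_\be(\be)\ge \frac12\norm{\bw_t-\be}^2\ge \frac{\mu^2}{2}\sum_{i=1}^d \mathbf{1}_{w_{t,i} e_i<0}$, so the expected average regret is at least $\frac{\mu^2}{2T}\sum_{t=1}^T \E\big[\sum_{i=1}^d \mathbf{1}_{w_{t,i}e_i<0}\big]$. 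The key difference from the optimization-error proof is that here the ``estimator'' whose coordinate-signs we care about at round $t$ is $\bw_t$ itself, which — crucially — is a deterministic function of only $v_1,\dots,v_{t-1}$, i.e.\ of the first $t-1$ query values.

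The main step is then to apply \lemref{lem:probdkl} separately for each round $t$, with the role of ``$\bar{\bw}_T$'' played by $\bw_t$ and the role of ``$v_1,\dots,v_T$'' played by the truncated sequence $v_1,\dots,v_{t-1}$. This is legitimate because $\bw_t$ depends deterministically on $v_1,\dots,v_{t-1}$, exactly the hypothesis the lemma requires (with $T$ replaced by $t-1$). This gives, for each $t$,
\[
\E\left[\sum_{i=1}^d \mathbf{1}_{w_{t,i}e_i<0}\right] \ge \frac d2\left(1-\sqrt{\frac1d\sum_{i=1}^d\sum_{l=1}^{t-1}U_{l,i}}\right),
\]
and reusing the Gaussian KL bound $U_{l,i}\le 2\mu^2 w_{l,i}^2/\max\{1,\norm{\bw_l}^4\}$ from the proof of \thmref{thm:quadlow}, together with $\sum_i w_{l,i}^2/\max\{1,\norm{\bw_l}^4\}=\min\{\norm{\bw_l}^2,1/\norm{\bw_l}^2\}\le 1$, yields $\frac1d\sum_i\sum_{l=1}^{t-1}U_{l,i}\le 2\mu^2(t-1)/d \le 2\mu^2 T/d$. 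Hence every round contributes at least $\frac{\mu^2}{2}\cdot\frac d2\big(1-\sqrt{2\mu^2 T/d}\big)$ to the sum over $t$, and dividing by $T$ the expected average regret is at least $\frac{d\mu^2}{4}\big(1-\sqrt{2T\mu^2/d}\big)$ — the same expression as \eqref{eq:quadloweq}, but now as a bound on regret rather than on single-point error.

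To finish I would optimize $\mu$: taking $\mu=\min\{1/(2\sqrt d),\ \frac12\sqrt{d/(4T)}\} = \frac12\min\{1/\sqrt d,\sqrt{d/4T}\}$ keeps $\norm{\be}=\mu\sqrt d\le 1/2$ and makes $\sqrt{2T\mu^2/d}\le 1/\sqrt2$, giving a bound of the form $\frac14(1-1/\sqrt2)\cdot\frac14\min\{1,d^2/(4T)\}$ in the error expression; but because the relevant rate for regret is $\sqrt{d^2/T}$ I would instead balance differently — note $d\mu^2 = \min\{1,d^2/4T\}\cdot(\text{const})$ already scales like $\min\{1,\sqrt{d^2/T}\}$ only when $T\le d^2$, so for $T>d^2$ I should not shrink $\mu$ all the way to $\sqrt{d/4T}$ but rather keep $\mu$ of order $1/\sqrt{d}$ up to a constant so that $\sqrt{2T\mu^2/d}$ stays bounded below $1$; the honest choice is $\mu = c\min\{1/\sqrt d,\ d^{1/2}T^{-1/2}\cdot(\text{scaled})\}$ tuned so that $d\mu^2 \asymp \min\{1,\sqrt{d^2/T}\}$ and simultaneously $2T\mu^2/d\le 1/2$. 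The arithmetic here is the only delicate part, and I expect the main obstacle to be choosing $\mu$ so that the product $d\mu^2$ matches the claimed $\sqrt{d^2/T}$ rate (rather than the $d^2/T$ rate of the optimization-error bound) while still controlling the KL sum; once $\mu$ is fixed correctly the constant $0.02$ drops out of a routine computation. Everything else — the strong-convexity lower bound, the per-round application of \lemref{lem:probdkl}, the Gaussian KL estimate — transfers verbatim from the proof of \thmref{thm:quadlow}.
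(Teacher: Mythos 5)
Your reduction to \eqref{eq:quadloweq} is fine (the per-round use of \lemref{lem:probdkl} is legitimate, though unnecessary: the paper simply applies the original lemma to $\bar{\bw}_T=\frac{1}{T}\sum_t\bw_t$ and invokes Jensen), but the final tuning step you flag as ``delicate arithmetic'' is not delicate---it is impossible as you have set it up, and this is exactly where the key idea of the theorem lives. Once you bound $\sum_i w_{t,i}^2/\max\{1,\norm{\bw_t}^4\}=\min\{\norm{\bw_t}^2,1/\norm{\bw_t}^2\}\leq 1$ uniformly, the KL sum is controlled only by $2T\mu^2/d$, so keeping it below a constant forces $\mu^2=\Ocal(d/T)$, and then $d\mu^2=\Ocal(d^2/T)$: you recover only the optimization-error rate of \thmref{thm:quadlow}. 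Your two requirements, $d\mu^2\asymp\sqrt{d^2/T}$ (i.e.\ $\mu^2\asymp 1/\sqrt{T}$, independent of $d$) and $2T\mu^2/d\leq 1/2$ (i.e.\ $\mu^2\leq d/4T$), are mutually contradictory precisely in the regime $T\gg d^2$ where the $\sqrt{d^2/T}$ bound is the nontrivial claim. No choice of $\mu$ closes this.

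The missing ingredient is the self-referential use of the regret itself to control $\sum_t\norm{\bw_t}^2$, which is what lets the paper take the much larger value $\mu=T^{-1/4}/2$. Since $\bw^*=\be$ has tiny norm, strong convexity gives $R\geq\frac{1}{2T}\sum_t\E\norm{\bw_t-\be}^2$, whence $\sum_t\norm{\bw_t}^2\leq 4TR+2Td\mu^2$ (this is \eqref{eq:regretlow2} and the step after it); substituting this---rather than the crude bound $\min\{\cdot\}\leq 1$---into the KL term of \eqref{eq:quadloweq} yields $R\geq\frac{d\mu^2}{4}\bigl(1-\sqrt{4\mu^2TR/d}-\sqrt{2T\mu^4}\bigr)$, a quadratic inequality in $\sqrt{R}$ whose positive root gives $\sqrt{R}\geq 0.17\sqrt{d/\sqrt{T}}$ for $\mu=T^{-1/4}/2$, i.e.\ $R\gtrsim\sqrt{d^2/T}$. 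In other words, the regret setting is harder for the learner than the optimization setting exactly because querying far from $\be$ (which is what makes the signs of $\be$ identifiable) is itself penalized; your proposal discards that leverage the moment you drop $\norm{\bw_t-\be}^2$ down to $\mu^2\sum_i\mathbf{1}_{w_{t,i}e_i<0}$. You would also still need the paper's preliminary reduction to $T\geq d^2$ (run the strategy for $T$ rounds, then replay the average) to handle the $\min\{1,\cdot\}$ and the constraint $\norm{\be}\leq 1/2$ for the larger $\mu$.
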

Note that our lower bound holds even when the domain is unrestricted (the algorithm can pick any point in $\reals^d$). Moreover, the lower bound coincides (up to a constant) with the $\Ocal(\sqrt{d^2/T})$ regret upper-bound shown for strongly-convex and smooth functions in \cite{AgDeXi10}. This shows that for strongly-convex and smooth functions, the minimax average regret is $\Theta(\sqrt{d^2/T})$. Also, the lower bound implies that one cannot hope to obtain average regret better than $\sqrt{d^2/T}$ for more general bandit problems, such as strongly-convex or even convex problems.

The proof relies on techniques similar to the lower bound of \thmref{thm:quadlow}, with a key additional insight. Specifically, in \thmref{thm:quadlow}, the lower bound obtained actually depends on the norm of the points $\bw_1,\ldots,\bw_T$ (see \eqref{eq:quadloweq}), and the optimal $\bw^*$ has a very small norm. In a regret minimization setting the points $\bw_1,\ldots,\bw_T$ cannot be too far from $\bw^*$, and thus must have a small norm as well, leading to a stronger lower bound than that of \thmref{thm:quadlow}. The formal proof appears in the appendix.

\section{Strongly Convex and Smooth Functions}\label{sec:strongly_convex}

We now turn to the more general case of strongly convex and smooth functions. First, we note that in the case of functions which are both strongly convex and smooth, \cite[Theorem 14]{AgDeXi10} already provided an $\Ocal(\sqrt{d^2/T})$ average regret bound (which holds even in a non-stochastic setting). The main result of this section is a \emph{matching} lower bound, which holds even if we look at the much easier case of derivative-free SCO. This lower bound implies that the attainable error for strongly-convex and smooth functions is order of $\sqrt{d^2/T}$, and at least $\sqrt{d^2/T}$ for any harder setting.

\begin{theorem}\label{thm:convex}
Let the number of rounds $T$ be fixed. Then for any (possibly randomized) querying strategy, there exists a function $F$ over $\reals^d$ which is $0.5$-strongly convex and $3.5$-smooth; Is $4$-Lipschitz over the unit Euclidean ball; has a global minimum in the unit ball; And such that the resulting $\bar{\bw}_T$ satisfies
\[
\E[F(\bar{\bw}_T)-F(\bw^*)] \geq 0.004 \min\left\{1,\sqrt{\frac{d^2}{T}}\right\}.
\]
\end{theorem}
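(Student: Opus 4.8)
The plan is to reduce the strongly-convex-and-smooth case to the quadratic lower bound of \thmref{thm:quadlow} by constructing a family of functions that behave like the ``hard'' quadratics $F_\be(\bw) = \frac12\norm{\bw}^2 - \inner{\be,\bw}$ near the optimum, but which also prevent the querying strategy from exploiting unbounded domains to get noiseless estimates. Recall that the quadratic lower bound already applies to quadratic functions and already gives $\Omega(\min\{1,d^2/T\})$ error; the catch is that $d^2/T$ is a \emph{faster} decay than the $\sqrt{d^2/T}$ we are now claiming, so one might wonder why the bound here is weaker. The point is that the quadratic construction of \thmref{thm:quadlow} \emph{crucially} uses the large noise level $\max\{1,\norm{\bw}^2\}$, which is only allowed because the variance bound in the preliminaries scales with $\norm{\bw}^2$. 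For a \emph{fixed} function that is genuinely $4$-Lipschitz over the unit ball and whose global minimizer lies in the unit ball, we no longer get to inflate the noise arbitrarily; effectively the noise is $\Ocal(1)$ on the relevant region, and the same KL-divergence calculation then yields per-round information $\Ocal(\mu^2 w_{t,i}^2)$ with the $w_{t,i}^2$ no longer damped by a $1/\norm{\bw_t}^4$ factor. Re-running Lemma~\ref{lem:probdkl} with $\sum_{t,i} U_{t,i} \le 2\mu^2 \sum_t \norm{\bw_t}^2$ and optimizing over $\mu$ gives exactly the $\sqrt{d^2/T}$ rate. So the real content is: (i) confine the hard instances to a bounded region so that the Lipschitz/smoothness constants stated in the theorem are met, and (ii) force $\sum_t \norm{\bw_t}^2$ to be controllable (or handle the case where it is large).

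Concretely, I would take $F_\be(\bw) = g(\bw) - \inner{\be,\bw}$ where $g$ is a fixed smooth strongly-convex ``template'' that equals $\frac12\norm{\bw}^2$ on a ball of radius, say, $2$, and then grows at a controlled rate (e.g.\ linearly, smoothed) outside that ball, so that $g$ is globally $0.5$-strongly convex, $3.5$-smooth, and $4$-Lipschitz on the unit ball; with $\be \in \{-\mu,\mu\}^d$ and $\mu \le 1/\sqrt d \le 1$ small, $\norm{\be}\le 1$ and the minimizer $\bw^* = \be + \Ocal(\mu)$ (the perturbation from the linear term) stays in the unit ball, and on the unit ball $F_\be$ still satisfies $F_\be(\bw) - F_\be(\bw^*) \ge \frac{1}{4}\norm{\bw - \bw^*}^2$ by strong convexity. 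The noise is taken Gaussian with a \emph{bounded} standard deviation (absorbing the $\max\{1,\cdot\}$ into a constant, since everything of interest happens in a bounded region; queries far outside can be assumed WLOG not to help, or can be handled by the growth of $g$ making far-away queries uninformative about $\be$ relative to their cost). Then the chain \eqref{eq:subopt1}, Lemma~\ref{lem:probdkl}, Lemma~\ref{lem:gaussians} goes through verbatim except that $\max\{1,\norm{\bw_t}^4\}$ is replaced by a constant, yielding a lower bound of the form $\frac{d\mu^2}{4}\bigl(1 - \sqrt{c\mu^2 T / d}\cdot\sqrt{\frac1d\sum_t\norm{\bw_t}^2}\bigr)$ — wait, more precisely $\frac{d\mu^2}{4}\bigl(1 - \sqrt{(c\mu^2/d)\sum_{t=1}^T \norm{\bw_t}^2}\bigr)$ — and since queries in the unit ball have $\norm{\bw_t}^2 \le 1$ (and queries outside it can be shown not to tighten the adversary's task), this is at least $\frac{d\mu^2}{4}(1 - \sqrt{c\mu^2 T/d})$; choosing $\mu \asymp \min\{1/\sqrt d, (d/T)^{1/4}\}$ — hmm, I need to recheck the exponent — choosing $\mu^2 \asymp \min\{1/d, \sqrt{1/(dT)}\}$ balances the terms and gives $\Omega(\min\{1,\sqrt{d^2/T}\})$.

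The main obstacle, as in \thmref{thm:quadlow}, is dealing rigorously with queries outside the unit ball: the querying strategy is allowed to pick any point in $\reals^d$, and one must argue that going far out cannot help it identify $\sign(e_i)$ any better than the $U_{t,i}$ bound permits. Here the fixed template $g$ helps: outside radius $2$ the function no longer carries the $\inner{\be,\bw}$ signal in a ``cheap'' way because the curvature term $g$ dominates and (crucially) the noise standard deviation does \emph{not} grow (unlike in \thmref{thm:quadlow}), so the per-coordinate KL divergence $\propto \mu^2 w_{t,i}^2 / \sigma^2$ could in principle be \emph{large} for far-out queries. The fix is to instead make $g$ grow fast enough (or cap $w_{t,i}$'s contribution) — i.e., choose the template so that the relevant quantity $\min\{\norm{\bw_t}^2, 1/\norm{\bw_t}^2\}$-type damping reappears, OR, more cleanly, note that the noise standard deviation is \emph{allowed} to be $\max\{1,\norm{\bw}^2\}$ by the preliminaries, so we may keep that noise model, which immediately restores the exact argument of \thmref{thm:quadlow} and its $\min\{\norm{\bw_t}^2,1/\norm{\bw_t}^2\}$ bound; the only change from \thmref{thm:quadlow} is that the template $g$ is now a genuine fixed strongly-convex-and-smooth-and-Lipschitz function rather than a pure quadratic, which costs us nothing in the information-theoretic bound but forces $\mu$ to be small enough that $\bw^*$ and the interesting geometry stay where $g = \frac12\norm{\cdot}^2$. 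Under that reading the computation is essentially identical to \eqref{eq:quadloweq}, and the weaker $\sqrt{d^2/T}$ rate (versus $d^2/T$) must come from a different place — namely that Lipschitzness on the unit ball with a \emph{fixed} constant forces $\mu$ not to shrink too slowly, capping how small we can take it — so I would carefully track how the constraints ``$4$-Lipschitz on the unit ball'' and ``minimizer in the unit ball'' interact with the choice of $\mu$ and with the magnitude of $\norm{\bw^* - \be}$ induced by the linear perturbation, since that is where the extra factor of $\sqrt{T/d^2}$ relative to \thmref{thm:quadlow} is lost. The remaining steps (verifying the strong convexity, smoothness, and Lipschitz constants $0.5, 3.5, 4$ for the explicit template; the KL/Assouad chain; the final optimization over $\mu$) are routine.
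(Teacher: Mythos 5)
There is a genuine gap, and it is the central idea of the proof. Your construction $F_\be(\bw)=g(\bw)-\inner{\be,\bw}$, with a fixed strongly-convex template $g$ and a \emph{linear} perturbation in $\be$, cannot yield the claimed $\Omega(\sqrt{d^2/T})$ bound. With such a perturbation, flipping the sign of $e_i$ changes the function value by $2\mu|w_{t,i}|$, which is of order $\mu$ (not $\mu^2$) at unit-norm queries; with constant-variance noise the per-query KL divergence is then $\Theta(\mu^2)$, so the Assouad/Pinsker chain of \lemref{lem:probdkl} gives at best a bound of the form $\frac{d\mu^2}{4}\bigl(1-\sqrt{c\,T\mu^2/d}\bigr)$, whose optimum over $\mu$ (at $\mu^2\asymp d/T$) is $\Theta(d^2/T)$ --- and indeed an algorithm can learn each $\sign(e_i)$ with $O(1/\mu^2)$ unit-norm queries, so no choice of template or of $\mu$ rescues this family. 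Note that for $T\geq d^2$ the target $\sqrt{d^2/T}$ is \emph{larger} than $d^2/T$, i.e.\ the statement you are proving is a \emph{stronger} lower bound than \thmref{thm:quadlow}, whereas your write-up repeatedly treats it as weaker; this sign error is where the argument goes astray (your claim that re-running \lemref{lem:probdkl} with $\sum_{t,i}U_{t,i}\leq 2\mu^2\sum_t\norm{\bw_t}^2$ ``gives exactly the $\sqrt{d^2/T}$ rate'' is incorrect arithmetic, and your fallback of reinstating the $\max\{1,\norm{\bw}^2\}$ noise model just reproduces the quadratic setting and its $d^2/T$ bound).

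The missing idea, which is the heart of the paper's proof, is to make the two hypotheses $e_i=\pm\mu$ differ by only $O(\mu^2)$ in function value \emph{at every query point}, not merely near the optimum. The paper achieves this with the nonlinear perturbation $F_\be(\bw)=\norm{\bw}^2-\sum_i \frac{e_i w_i}{1+(w_i/e_i)^2}$: the denominator damps the signal for large $|w_i|$, so $|F_\be(\bw)-F_{\be'}(\bw)|\leq\mu^2$ uniformly when $\be,\be'$ differ in one coordinate (\lemref{lem:function}), the per-query KL is then at most $\mu^4/2$ with unit-variance Gaussian noise, and \lemref{lem:probdkl} gives $\frac{d\mu^2}{72}\bigl(1-\sqrt{T\mu^4/2}\bigr)$, which with $\mu=T^{-1/4}$ yields $\Omega(\sqrt{d^2/T})$. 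One then checks separately that this family has the stated strong convexity, smoothness, Lipschitz, and minimizer-location properties, and handles $T<d^2$ by a reduction as in \thmref{thm:quadlowregret}. Your proposal never produces a perturbation with this uniform $O(\mu^2)$ indistinguishability property, so the claimed rate does not follow.
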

Note that we made no attempt to optimize the constant.

The general proof technique is rather similar to that of \thmref{thm:quadlow}, but the construction is a bit more intricate. Specifically, letting $\mu>0$ be a parameter to be determined later, we look at functions of the form
\[
F_{\be}(\bw) = \norm{\bw}^2-\sum_{i=1}^{d}\frac{e_i w_i}{1+(w_i/e_i)^2},
\]
where $\be$ is uniformly distributed on $\left\{-\mu,+\mu\right\}^d$. To see the intuition behind this choice, let us consider the one-dimensional case ($d=1$). Recall that in the quadratic setting, the function we considered (in one dimension) was of the form
\[
F_{e}(w) = \frac{1}{2}w^2-e w,
\]
where $e$ was chosen uniformly at random from $\{-\mu,+\mu\}$, and $\mu$ is a ``small'' number. Thus, the optimum is at either $-\mu$ or $\mu$, and the difference $|F_{\mu}(w)-F_{-\mu}(w)|$ at these optima is order of $\mu^2$. However, by picking $w=\Theta(1)$, the difference $|F_{\mu}(w)-F_{-\mu}(w)|$ is on the order of $\mu$ - much larger than the difference close to the optimum, which is order of $\mu^2$. Therefore, by querying for $w$ far from the optimum, and getting noisy values of $F_{e}$, it is easier to distinguish whether we are dealing with $e=+\mu$ or $e=-\mu$, leading to a $d^2/T$ optimization error bound. In contrast, the function we consider here (in the one-dimensional case) is of the form
\begin{equation}\label{eq:1dfunc}
F_{e}(w)= w^2-\frac{e w}{1+(w/e)^2}.
\end{equation}
This form is carefully designed so that $|F_{\mu}(w)-F_{-\mu}(w)|$ is order of $\mu^2$, not just at the optima of $F_{\mu}$ and $F_{-\mu}$, but for \emph{all} $w$. This is because of the additional denominator, which makes the function closer and closer to $w^2$ the larger $w$ is - see \figref{fig:functions} for a graphical illustration. As a result, no matter how the function is queried, distinguishing the choice of $\mu$ is difficult, leading to the strong lower bound of \thmref{thm:convex}. A formal proof is presented in the appendix.

\begin{figure}[t]
\begin{center}
\includegraphics[scale=0.7]{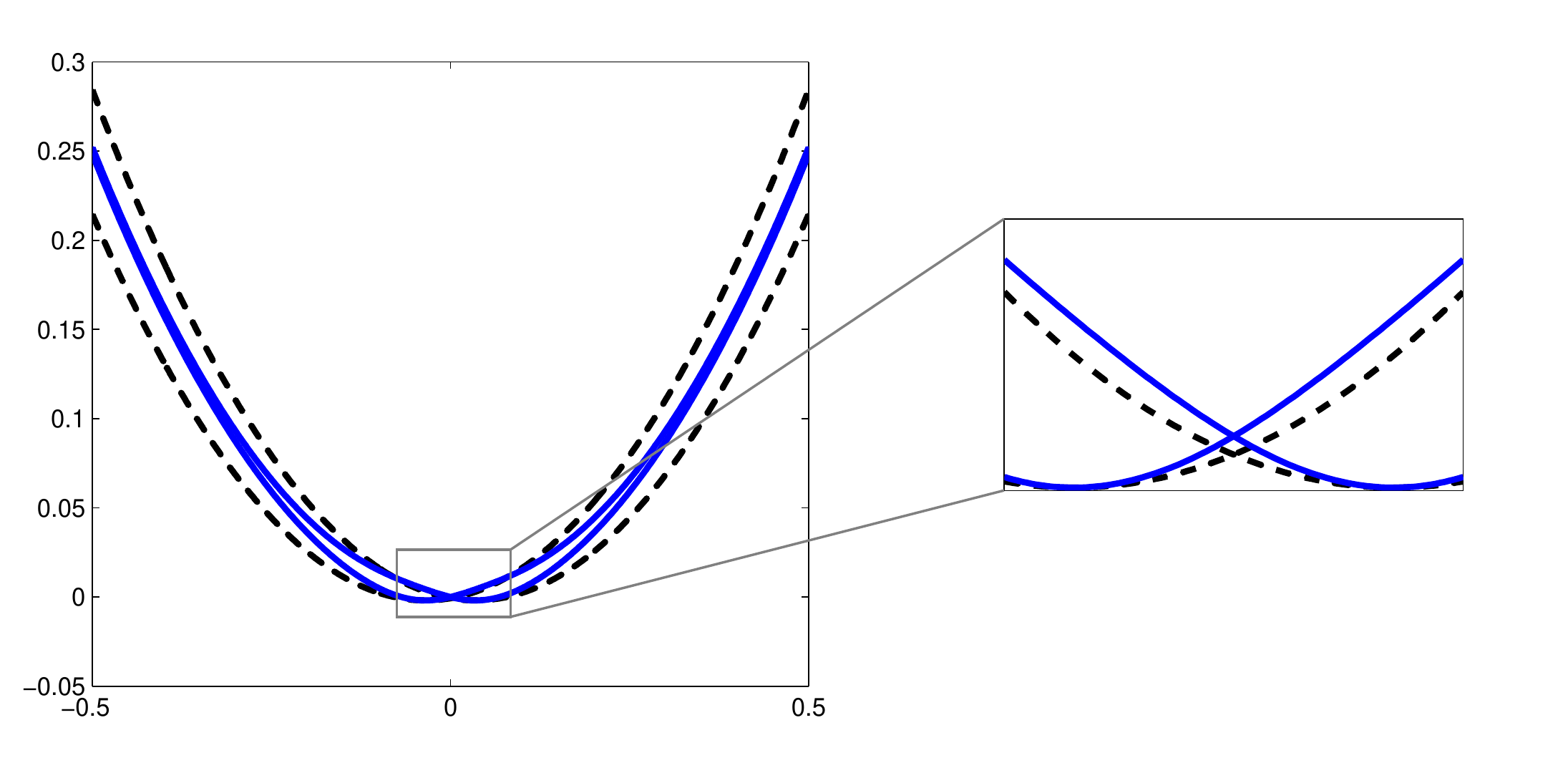}
\end{center}
\vskip -1.0cm
\caption{The two solid blue lines represents $F_e(w)$ as in \eqref{eq:1dfunc}, for $e=0.1$ and $e=-0.1$, whereas the two dashed black lines represent two quadratic functions with similar minimum points. Close to the minima, $F_e(w)$ and the quadratic functions behave rather similarly. However, as we increase $|w|$, the two quadratic functions become rather distinguishable, whereas $F_e(w)$ become more and more \emph{in}distinguishable for the two choices of $e$. Thus, distinguishing whether $e=0.1$ or $e=-0.1$, based only on function values is of $F_e(w)$, is much harder than the quadratic case}
\label{fig:functions}
\end{figure}

\section{Discussion}\label{sec:discussion}

In this paper, we considered the dual settings of bandit and derivative-free stochastic convex optimization. We provided a sharp characterization of the attainable performance for strongly-convex and smooth functions. The results also provide useful lower-bounds for more general settings. We also considered the case of quadratic functions, showing that a ``fast'' $\Ocal(1/T)$ rate is possible in a stochastic setting, even without knowledge of derivatives. Our results have several qualitative differences compared to previously known results which focus on linear functions, such as quadratic dependence on the dimension even for extremely ``nice'' functions, and a provable gap between the attainable performance in bandit optimization and derivative-free optimization.

Our work leaves open several questions. For example, we have only dealt with bounds which hold in expectation, and our lower bounds focused on the dependence on $d,T$, where other problem parameters, such as the Lipschitz constant and strong convexity parameter, are fixed constants. While this follows the setting of previous works, it does not cover situations where these parameters scale with $d$. Finally, while this paper settles the case of strongly-convex and smooth functions, we still don't know what is the attainable performance for general convex functions, as well as the more specific case of strongly-convex (possibly non-smooth) functions. Our $\Omega\left(\sqrt{d^2/T}\right)$ lower bound still holds, but the existing upper bounds are much larger: $\min\left\{\sqrt[4]{d^2/T},\sqrt{d^{32}/T}\right\}$ for convex functions, and $\min\left\{\sqrt[3]{d^2/T},\sqrt{d^{32}/T}\right\}$ for strongly-convex functions (see table \ref{table:results}). We don't know if the lower bound or the existing upper bounds are tight. However, it is the current upper bounds which seem less ``natural'', and we suspect that they are the ones that can be considerably improved, using new algorithms which remain undiscovered.

\acks{We thank John Duchi, Satyen Kale, Robi Krauthgamer and the anonymous reviewers for helpful discussions and comments.}

\bibliography{mybib}

\begin{thebibliography}{24}
\providecommand{\natexlab}[1]{#1}
\providecommand{\url}[1]{\texttt{#1}}
\expandafter\ifx\csname urlstyle\endcsname\relax
  \providecommand{\doi}[1]{doi: #1}\else
  \providecommand{\doi}{doi: \begingroup \urlstyle{rm}\Url}\fi

\bibitem[Abbasi-Yadkori et~al.(2011)Abbasi-Yadkori, P{\'a}l, and
  Szepesv{\'a}ri]{abpasz11}
Y.~Abbasi-Yadkori, D.~P{\'a}l, and C.~Szepesv{\'a}ri.
\newblock Improved algorithms for linear stochastic bandits.
\newblock In \emph{NIPS}, 2011.

\bibitem[Agarwal et~al.(2010)Agarwal, Dekel, and Xiao]{AgDeXi10}
A.~Agarwal, O.~Dekel, and L.~Xiao.
\newblock Optimal algorithms for online convex optimization with multi-point
  bandit feedback.
\newblock In \emph{COLT}, 2010.

\bibitem[Agarwal et~al.(2011)Agarwal, Foster, Hsu, Kakade, and
  Rakhlin]{AgFoHsKaRa11}
A.~Agarwal, D.~Foster, D.~Hsu, S.~Kakade, and A.~Rakhlin.
\newblock Stochastic convex optimization with bandit feedback.
\newblock In \emph{NIPS}, 2011.

\bibitem[Arias-Castro et~al.(2011)Arias-Castro, Cand{\`e}s, and
  Davenport]{arcandav11}
E.~Arias-Castro, E.~Cand{\`e}s, and M.~Davenport.
\newblock On the fundamental limits of adaptive sensing.
\newblock \emph{CoRR}, abs/1111.4646, 2011.

\bibitem[Audibert and Bubeck(2009)]{AudBub09}
J.-Y. Audibert and S.~Bubeck.
\newblock Minimax policies for adversarial and stochastic bandits.
\newblock In \emph{COLT}, 2009.

\bibitem[Audibert et~al.(2011)Audibert, Bubeck, and Lugosi]{AudBubLu11}
J.-Y. Audibert, S.~Bubeck, and G.~Lugosi.
\newblock Minimax policies for combinatorial prediction games.
\newblock \emph{COLT}, 2011.

\bibitem[Auer et~al.(2002)Auer, Cesa-Bianchi, Freund, and
  Schapire]{AuerCesFrSc02}
P.~Auer, N.~Cesa-Bianchi, Y.~Freund, and R.~Schapire.
\newblock The nonstochastic multiarmed bandit problem.
\newblock \emph{SIAM J. Comput.}, 32\penalty0 (1):\penalty0 48--77, 2002.

\bibitem[Bubeck and Cesa-Bianchi(2012)]{BubCes12}
S.~Bubeck and N.~Cesa-Bianchi.
\newblock Regret analysis of stochastic and nonstochastic multi-armed bandit
  problems.
\newblock \emph{CoRR}, abs/1204.5721, 2012.

\bibitem[Bubeck et~al.(2011)Bubeck, Munos, and Stoltz]{bubeck2011pure}
S.~Bubeck, R.~Munos, and G.~Stoltz.
\newblock Pure exploration in finitely-armed and continuous-armed bandits.
\newblock \emph{Theoretical Computer Science}, 412\penalty0 (19):\penalty0
  1832--1852, 2011.

\bibitem[Bubeck et~al.(2012)Bubeck, Cesa-Bianchi, and Kakade]{BubCesKa12}
S.~Bubeck, N.~Cesa-Bianchi, and S.~Kakade.
\newblock Towards minimax policies for online linear optimization with bandit
  feedback.
\newblock In \emph{COLT}, 2012.

\bibitem[Cesa-Bianchi and Lugosi(2006)]{CesaBianchiLu06}
N.~Cesa-Bianchi and G.~Lugosi.
\newblock \emph{Prediction, learning, and games}.
\newblock Cambridge University Press, 2006.

\bibitem[Cover and Thomas(2006)]{covthom06}
T.~Cover and J.~Thomas.
\newblock \emph{Elements of information theory}.
\newblock Wiley, 2 edition, 2006.

\bibitem[Cybakov(2009)]{cybakov2009introduction}
A.B. Cybakov.
\newblock \emph{Introduction to nonparametric estimation}.
\newblock Springer series in statistics. Springer, 2009.

\bibitem[Dani et~al.(2007)Dani, Hayes, and Kakade]{DanHaKa07}
V.~Dani, T.~Hayes, and S.~Kakade.
\newblock The price of bandit information for online optimization.
\newblock In \emph{NIPS}, 2007.

\bibitem[Dani et~al.(2008)Dani, Hayes, and Kakade]{DanHaKa08}
V.~Dani, T.~Hayes, and S.~Kakade.
\newblock Stochastic linear optimization under bandit feedback.
\newblock In \emph{COLT}, 2008.

\bibitem[Flaxman et~al.(2005)Flaxman, Kalai, and McMahan]{FlaxKaMc05}
A.~Flaxman, A.~Kalai, and B.~McMahan.
\newblock Online convex optimization in the bandit setting: gradient descent
  without a gradient.
\newblock In \emph{SODA}, 2005.

\bibitem[Hazan and Kale(2011)]{HazKa11}
E.~Hazan and S.~Kale.
\newblock Beyond the regret minimization barrier: an optimal algorithm for
  stochastic strongly-convex optimization.
\newblock In \emph{COLT}, 2011.

\bibitem[Jamieson et~al.(2012)Jamieson, Nowak, and Recht]{JaNoRe12}
K.~Jamieson, R.~Nowak, and B.~Recht.
\newblock Query complexity of derivative-free optimization.
\newblock \emph{CoRR}, abs/1209.2434, 2012.

\bibitem[Kullback(1959)]{kullback59}
S.~Kullback.
\newblock \emph{Information Theory and Statistics}.
\newblock Dover, 1959.

\bibitem[Nemirovsky and Yudin(1983)]{YudNem83}
A.~Nemirovsky and D.~Yudin.
\newblock \emph{Problem Complexity and Method Efficiency in Optimization}.
\newblock Wiley-Interscience, 1983.

\bibitem[Nesterov(2011)]{nesterov11}
Y.~Nesterov.
\newblock Random gradient-free minimization of convex functions.
\newblock Technical Report~16, ECORE Discussion Paper, 2011.

\bibitem[Rakhlin et~al.(2012)Rakhlin, Shamir, and Sridharan]{RakhShaSri12}
A.~Rakhlin, O.~Shamir, and K.~Sridharan.
\newblock Making gradient descent optimal for strongly convex stochastic
  optimization.
\newblock In \emph{ICML}, 2012.

\bibitem[Stich et~al.(2011)Stich, M{\"u}ller, and G{\"a}rtner]{StichMuGa11}
S.~Stich, C.~M{\"u}ller, and B.~G{\"a}rtner.
\newblock Optimization of convex functions with random pursuit.
\newblock \emph{CoRR}, abs/1111.0194, 2011.

\bibitem[Zinkevich(2003)]{Zin03}
M.~Zinkevich.
\newblock Online convex programming and generalized infinitesimal gradient
  ascent.
\newblock In \emph{ICML}, 2003.

\end{thebibliography}

\appendix

\section{Improved Results for Quadratic Functions}\label{sec:ridge}

In \secref{sec:quadratic}, we showed a tight $\Theta(d^2/T)$ bound on the achievable error for quadratic functions, in the derivative-free SCO setting. This was shown under the assumption that the noise $\xi_{\bw}$ is zero-mean and has a second moment bounded by $\max\{1,\norm{\bw}^2\}$. In this appendix, we show how under additional natural assumptions on the noise, one can improve on this result with an efficient algorithm. The main message here is not so much the algorithmic result, but rather to show that the generic noise assumption is important for our lower bounds, and that better algorithms may still be possible for more specific settings.

To give a concrete example, consider the classic setting of ridge regression, where we have labeled training examples $(\bx,y)$ sampled i.i.d. from some distribution over $\reals^d\times \reals$, and our goal is to find some $\bw\in \reals^d$ minimizing
\[
F(\bw) = \frac{\lambda}{2}\norm{\bw}^2+\E_{(\bx,y)}\left[\left(\bw^\top \bx - y\right)^2\right].
\]
In a bandit / derivative-free SCO setting, we can think of each query as giving as the value of
\begin{equation}\label{eq:ridge}
\hat{F}(\bw) = \frac{\lambda}{2}\norm{\bw}^2+\left(\bw^\top \bx - y\right)^2.
\end{equation}
for some specific example $(\bx,y)$, and note that its expected value (over the random draw of $(\bx,y)$) equals $F(\bw)$. Thus, it falls within the setting considered in this paper. However, the noise process is not generic, but has a particular structure. We will show here that one can actually attain an error rate as good as $\Ocal(d/T)$ for this problem.

To formally present our result, it would be useful to consider a more general setting, the ridge regression setting above being a special case. Suppose we can write $F(\bw)$ as $\E[\hat{F}(\bw)]$, where $\hat{F}(\bw)$ decomposes into a deterministic term $R(\bw)$ and a stochastic quadratic term $\hat{G}(\bw)$:
\[
\hat{F}(\bw) ~=~ R(\bw)+\hat{G}(\bw) ~=~ R(\bw)+\left(\bw^\top \hat{A} \bw +\hat{\bb}^\top \bw+\hat{c}\right),
\]
where $\hat{A},\hat{\bb},\hat{c}$ are random variables. We assume that whenever we query a point $\bw$, we get $\hat{F}(\bw)$ for some random realization of $\hat{A},\hat{\bb},\hat{c}$. In general, $R(\bw)$ can be a strongly-convex regularization term, such as $\frac{\lambda}{2}\norm{\bw}^2$ in \eqref{eq:ridge}.

The algorithm we consider, Algorithm \ref{alg:quadratic2}, is a slight variant of Algorithm \ref{alg:quadratic}, which takes this decomposition of $F(\bw)$ into account when constructing its unbiased gradient estimate. Compared to Algorithm \ref{alg:quadratic}, this algorithm also queries at random points further away from $\bw_t$, up to a distance of $\sqrt{d}$. We will assume here that we can always query at such points\footnote{Similar to Algorithm \ref{alg:quadratic}, if one can only query at some distance $\epsilon \sqrt{d}$, where $\epsilon \in (0,1]$, then one can modify the algorithm to handle such cases, with the resulting error bound depending on $\epsilon$.}. We also let $\bar{\Wcal}=\Wcal \cap \{\bw:\norm{\bw}\leq B\}$ in the algorithm, where we recall that $B$ is some known upper bound on $\norm{\bw^*}$.

\begin{algorithm}
\caption{Derivative-Free SCO Algorithm for Decomposable-Quadratic Functions}
\label{alg:quadratic2}
\begin{algorithmic}
\STATE Input: Deterministic term $R(\cdot)$; Strong convexity parameter $\lambda>0$
\STATE Initialize $\bw_1=\mathbf{0}$.
\FOR{$t=1,\ldots,T-1$}
    \STATE Pick $\br\in \{-1,+1\}^d$ uniformly at random
    \STATE Query noisy function value $v$ at point $\bw_t+\br$
    \STATE Let $\tilde{\bg}=\left(v-R\left(\bw_t+\br\right)\right)\br+\bg_R(\bw_t)$, where $\bg_R(\bw)$ is a subgradient of $R(\cdot)$ at $\bw$
    \STATE Let $\bw_{t+1} = \Pi_{\bar{\Wcal}}\left(\bw_{t}-\frac{1}{\lambda t}\tilde{\bg}\right)$
\ENDFOR
\STATE Return $\bar{\bw}_T=\bw_{T}\frac{2}{T}\sum_{t=T/2}^{T}\bw_{t}$.
\end{algorithmic}
\end{algorithm}

We now show that with this algorithm, one can improve on our $\Ocal(d^2/T)$ error upper bound from (\thmref{thm:quadup}).
\begin{theorem}\label{thm:quadup2}
In the setting described above, suppose $\norm{\hat{A}}_2,\norm{\hat{\bb}},|\hat{c}|$ are all at most $1$ with probability $1$, the optimum $\bw^*$ has a norm of at most $B$, and $\norm{\bg_R(\bw)}\leq N$ for any $\bw\in\bar{\Wcal}$. Then under Assumption \ref{assump}, the point $\bar{\bw}_T$ returned by Algorithm \ref{alg:quadratic2} satisfies
\[
\E\left[F(\bar{\bw}_T)- F(\bw^*)\right] ~\leq~ 4(4+5\log(2))\frac{N^2+3d\left(\left(B+1\right)^4+\E\left[\norm{\hat{A}}_F^2\right]\right)}
{\lambda T},
\]
where $\norm{\cdot}_F$ is the Frobenius norm.
\end{theorem}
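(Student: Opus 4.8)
The plan is to mirror the proof of \thmref{thm:quadup}: show that Algorithm \ref{alg:quadratic2} is stochastic gradient descent on the $\lambda$-strongly-convex function $F$ with unbiased gradient estimates of controlled second moment, then invoke the suffix-averaging convergence bound of \cite[Theorem 5]{RakhShaSri12}. The only new ingredient is an analogue of \lemref{lem:momentbounds} tailored to the decomposition $\hat F(\bw)=R(\bw)+\hat G(\bw)$ and the larger query radius ($\br$ at distance $\sqrt d$ rather than $(\epsilon/\sqrt d)\br$). So the heart of the argument is a two-part lemma: (i) $\E_{\br,v}[\tilde\bg]=\nabla F(\bw_t)$, and (ii) $\E_{\br,v}[\norm{\tilde\bg}^2]\le N^2+3d\bigl((B+1)^4+\E[\norm{\hat A}_F^2]\bigr)$, up to the constant factors that appear in the theorem.

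For unbiasedness: write $\tilde\bg=(v-R(\bw_t+\br))\br+\bg_R(\bw_t)$ where $v=\hat F(\bw_t+\br)+\xi=R(\bw_t+\br)+\hat G(\bw_t+\br)+\xi$, so $v-R(\bw_t+\br)=\hat G(\bw_t+\br)+\xi$. Taking expectation over the independent noise $\xi$ kills that term, and taking expectation over the random realization of $\hat A,\hat\bb,\hat c$ replaces $\hat G$ by the deterministic quadratic $G(\bw)=F(\bw)-R(\bw)=\bw^\top A\bw+\bb^\top\bw+c$ (with $A=\E[\hat A]$, etc.). It then remains to check the standard fact that for $\br$ uniform on $\{-1,+1\}^d$, $\E[\,G(\bw_t+\br)\,\br\,]=\nabla G(\bw_t)$ when $G$ is quadratic: expanding $G(\bw_t+\br)$ into the constant, linear, and quadratic pieces in $\br$, the odd-in-$\br$ terms that survive multiplication by $\br$ and expectation reconstruct exactly $2A\bw_t+\bb=\nabla G(\bw_t)$ (using $\E[r_i r_j]=\delta_{ij}$ and $\E[r_i r_j r_k]=0$), while the even-order terms vanish after multiplying by $\br$. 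Adding $\bg_R(\bw_t)$ gives $\nabla G(\bw_t)+\bg_R(\bw_t)=\nabla F(\bw_t)$ (where for the possibly-nonsmooth $R$ one works with the subgradient inequality throughout, exactly as in the strong-convexity definition in \secref{sec:preliminaries}).

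For the second moment: bound $\norm{\tilde\bg}^2\le 2\norm{(v-R(\bw_t+\br))\br}^2+2\norm{\bg_R(\bw_t)}^2\le 2d\,(v-R(\bw_t+\br))^2+2N^2$ since $\norm{\br}^2=d$. Then $(v-R(\bw_t+\br))^2=(\hat G(\bw_t+\br)+\xi)^2\le 2\hat G(\bw_t+\br)^2+2\xi^2$; the noise term contributes $\E[\xi^2]\le\max\{1,\norm{\bw_t+\br}^2\}$, which is $\Ocal((B+1+\sqrt d)^2)$ — here I would be a little careful, since $\norm{\bw_t+\br}\le B+\sqrt d$, so this is $\Ocal(d)$ when $B=\Ocal(\sqrt d)$; more to the point the theorem's stated bound scales like $d\,(B+1)^4$, so I expect the intended reading is that $B$ is treated as a constant (as the paper does elsewhere), making $\E[\xi^2]=\Ocal((B+1)^2)\le\Ocal((B+1)^4)$. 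The genuinely quadratic term needs $\E[\hat G(\bw_t+\br)^2]$: expand $\hat G(\bw)=\bw^\top\hat A\bw+\hat\bb^\top\bw+\hat c$ at $\bw=\bw_t+\br$ with $\norm{\bw_t+\br}\le B+\sqrt d$, so $|\bw^\top\hat A\bw|\le\norm{\hat A}_2\norm{\bw}^2\le(B+\sqrt d)^2$, and use $\norm{\hat A}_2\le\norm{\hat A}_F$; collecting the pieces and taking expectations over $\hat A,\hat\bb,\hat c$ produces the $(B+1)^4+\E[\norm{\hat A}_F^2]$ combination, with the extra factor $d$ coming from the $2d(\cdot)$ prefactor. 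The main obstacle — and the only place real care is needed — is tracking these constants so that they line up with the factor $3d$ and the $4(4+5\log 2)$ in the theorem; in particular one must verify that the larger query radius does not blow up the variance beyond $\Ocal(d\,\mathrm{poly}(B)+d\,\E[\norm{\hat A}_F^2])$, which is precisely the point where the ``decomposable'' structure (subtracting the known $R$ and adding its exact subgradient, so that only the quadratic stochastic part contributes) is doing all the work and beating the generic $\Ocal(d^2)$ bound of \lemref{lem:momentbounds}. Finally, plugging $G=N^2+3d((B+1)^4+\E[\norm{\hat A}_F^2])$ (the resulting second-moment bound) into \cite[Theorem 5]{RakhShaSri12}, whose guarantee for SGD on a $\lambda$-strongly-convex function with suffix-averaging over the last $T/2$ iterates is $\Ocal(G/(\lambda T))$, yields the claimed bound; the suffix-averaging is what avoids the extra $\log T$, exactly as remarked after \thmref{thm:quadup}.
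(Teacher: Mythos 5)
Your overall plan (unbiasedness plus a second-moment bound, then \cite[Theorem 5]{RakhShaSri12} with suffix-averaging) is the same as the paper's, and your unbiasedness argument matches the proof of \lemref{lem:momentbounds2}. However, the heart of the theorem is the second-moment bound, and there the route you sketch does not work. Bounding the stochastic quadratic part at the query point by $|\left(\bw_t+\br\right)^\top \hat{A}\left(\bw_t+\br\right)|\leq \norm{\hat{A}}_2\norm{\bw_t+\br}^2\leq (B+\sqrt{d})^2$ and then squaring gives a term of order $d^2$ (for constant $B$), and after multiplying by the $2d$ prefactor from $\norm{\br}^2=d$ you get $\Ocal(d^3)$ -- nowhere near the claimed $3d\left((B+1)^4+\E[\norm{\hat{A}}_F^2]\right)$. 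Replacing $\norm{\hat{A}}_2$ by $\norm{\hat{A}}_F$ does not help; the issue is the query radius $\sqrt{d}$, which is precisely the obstacle you flag but do not resolve. The missing idea, which is what the paper's \lemref{lem:momentbounds2} actually does, is to expand $\left(\bw_t+\br\right)^\top \hat{A}\left(\bw_t+\br\right)=\bw_t^\top\hat{A}\bw_t+2\bw_t^\top\hat{A}\br+\br^\top\hat{A}\br$ and compute the second moments of the $\br$-dependent pieces exactly using the Rademacher moment structure: $\E\left[\left(\br^\top\hat{A}\br\right)^2\right]=\E\left[\sum_{i,j}\hat{a}_{i,j}^2\right]=\E\left[\norm{\hat{A}}_F^2\right]$ (all cross terms vanish since $\E[r_ir_jr_{i'}r_{j'}]$ is nonzero only for paired indices), and $\E\left[\left(\bw_t^\top\hat{A}\br\right)^2\right]=\E\left[\bw_t^\top\hat{A}\hat{A}^\top\bw_t\right]\leq B^2$ via $\E[\br\br^\top]=I$. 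These exact cancellations, not operator-norm bounds on the whole quadratic form, are what keep the bound at $\Ocal\!\left(d\left((B+1)^4+\E[\norm{\hat{A}}_F^2]\right)\right)$ and constitute the entire advantage of the decomposable structure.

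A second, related problem: you insert an additive noise term $\xi$ on top of the random realization of $\hat{A},\hat{\bb},\hat{c}$. In this setting there is none -- the paper assumes a query at $\bw$ returns exactly $\hat{F}(\bw)$ for a random realization, so $v-R(\bw_t+\br)=\hat{G}(\bw_t+\br)$ with no residual noise. If you did carry generic noise with $\E[\xi_{\bw}^2]\leq\max\{1,\norm{\bw}^2\}$, then at the query point $\norm{\bw_t+\br}^2$ can be of order $d$ even for constant $B$ (your claim that it is $\Ocal((B+1)^2)$ is incorrect, since $\norm{\br}=\sqrt{d}$), and the contribution $2d\,\E[\xi^2]=\Ocal(d^2)$ would again break the stated bound. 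So the theorem genuinely relies on the structured-noise assumption, and your proposal neither uses that assumption correctly in the noise term nor supplies the moment computation that the bound requires.
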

Note that if we only assume $\norm{\hat{A}}_2\leq 1$, then $\norm{\hat{A}}_F^2$ can be as high as $d$, which leads to an $\Ocal(d^2/T)$ bound, same as in \thmref{thm:quadup}. However, it may be much smaller than that. In particular, for the ridge regression case we considered earlier, $\hat{A}$ corresponds to $\bx \bx^\top$ where $\bx$ is a randomly drawn instance. Under the common assumption that $\norm{\bx}\leq \Ocal(1)$ (independent of the dimension), it follows that $\norm{\bx \bx^\top}_F^2 = \norm{\bx}^4 = \Ocal(1)$. Therefore, $\norm{\hat{A}}_F^2$ is independent of the dimension, leading to an $\Ocal(d/T)$ error upper bound in terms of $d,T$.

We remark that even in this specific setting, the $\Ocal(d/T)$ bound does not carry over to the bandit SCO setting (i.e. in terms of regret), since the algorithm requires us to query far away from $\bw_t$. Also, we again emphasize that this result does not contradict our lower bound in the quadratic case (\thmref{thm:quadlow}), since the setting there included a generic noise term, while here the stochastic ``noise'' has a very specific structure.

As to the proof of \thmref{thm:quadup2}, it is very similar to that of \thmref{thm:quadup}, the key difference being a better moment upper bound on the gradient estimate $\bar{\bg}^2$, as formalized in the following lemma. Plugging this improved bound into the calculations results in the theorem.
\begin{lemma}\label{lem:momentbounds2}
For any $\bw_t$, we have that $\E_{\br,v}[\tilde{\bg}]$ is a subgradient of $F(\bw_t)$, and
\[
\E_{\br,v}[\norm{\tilde{\bg}}^2] \leq 4\left(N^2+3d\left(\left(B+1\right)^4+~\E\left[\norm{\hat{A}}_F^2\right]\right)
\right).
\]
\end{lemma}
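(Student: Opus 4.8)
The plan is to follow the route of \lemref{lem:momentbounds}, but to carry the stochastic quadratic part $\hat{G}$ through the argument instead of bounding it crudely; the extra leverage is that a quadratic has an \emph{exact} second-order expansion, so the one-point estimate stays unbiased at \emph{any} offset scale, and we may query at an $\Ocal(1)$ (rather than vanishing) distance from $\bw_t$ without paying a $1/\epsilon$-type factor. First, since each query returns $v=\hat{F}(\bw_t+\br)=R(\bw_t+\br)+\hat{G}(\bw_t+\br)$ for some realization of $(\hat{A},\hat{\bb},\hat{c})$, we have $v-R(\bw_t+\br)=\hat{G}(\bw_t+\br)$, so $\tilde{\bg}=\hat{G}(\bw_t+\br)\br+\bg_R(\bw_t)$. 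Expanding exactly, $\hat{G}(\bw_t+\br)=\hat{G}(\bw_t)+\inner{\nabla\hat{G}(\bw_t),\br}+\br^\top\hat{A}\br$ with $\nabla\hat{G}(\bw_t)=(\hat{A}+\hat{A}^\top)\bw_t+\hat{\bb}$. Multiplying by $\br$ and using the Rademacher moments $\E[\br]=\mathbf{0}$, $\E[\br\br^\top]=I$, and $\E[r_i r_j r_k]=0$, the constant term and the quadratic-form term contribute zero, while the linear term contributes $\nabla\hat{G}(\bw_t)$; averaging also over $(\hat{A},\hat{\bb},\hat{c})$ gives $\E_{\br,v}[\tilde{\bg}]=\nabla G(\bw_t)+\bg_R(\bw_t)$ where $G=\E[\hat{G}]$, which is a subgradient of $F=R+G$ at $\bw_t$.

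For the second moment, write $\norm{\tilde{\bg}}^2\le 2\norm{\bg_R(\bw_t)}^2+2\,\hat{G}(\bw_t+\br)^2\norm{\br}^2\le 2N^2+2d\,\hat{G}(\bw_t+\br)^2$, using $\norm{\br}^2=d$ and $\norm{\bg_R(\bw_t)}\le N$. It then remains to bound $\E[\hat{G}(\bw_t+\br)^2]$. Using the same exact expansion and $(a+b+c)^2\le 3(a^2+b^2+c^2)$, split it into three pieces: the value $\hat{G}(\bw_t)^2$, which is at most $(B+1)^4$ since $\norm{\hat{A}}_2,\norm{\hat{\bb}},|\hat{c}|\le 1$ and $\norm{\bw_t}\le B$; the linear piece, whose $\br$-expectation is $\norm{\nabla\hat{G}(\bw_t)}^2\le(2B+1)^2\le(B+1)^4$; and the quadratic-form piece $\E_\br[(\br^\top\hat{A}\br)^2]$, which is the only genuinely dimension-dependent term.

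The crux is this last term. Using $r_i^2=1$, write $\br^\top\hat{A}\br=\mathrm{tr}(\hat{A})+\sum_{i\neq j}\hat{A}_{ij}r_i r_j$, the second summand having mean zero; a short fourth-moment computation (only those tuples $(i,j,k,\ell)$ whose indices can be matched in pairs contribute to $\E[r_i r_j r_k r_\ell]$) gives $\E_\br[(\sum_{i\neq j}\hat{A}_{ij}r_i r_j)^2]=\sum_{i\neq j}(\hat{A}_{ij}^2+\hat{A}_{ij}\hat{A}_{ji})\le 2\norm{\hat{A}}_F^2$, hence
\[
\E_\br\!\left[(\br^\top\hat{A}\br)^2\right]\ \le\ \big(\mathrm{tr}(\hat{A})\big)^2+2\norm{\hat{A}}_F^2 .
\]
(Only the symmetric part of $\hat{A}$ enters the quadratic form, and replacing $\hat{A}$ by it only decreases the Frobenius norm, so this causes no loss.) Assembling the three pieces and taking the expectation over the noise yields a bound on $\E_{\br,v}[\hat{G}(\bw_t+\br)^2]$, and thus on $\E\norm{\tilde{\bg}}^2$, of the stated form. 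I expect the trace term $(\mathrm{tr}\,\hat{A})^2$ to be the main obstacle: it is exactly what an adversarial $\hat{A}$ can inflate (cf.\ a dimension factor lurking in $\norm{\hat{A}}_F^2$), and it is also what makes the ridge-regression instance so favorable — there $\hat{A}=\bx\bx^\top$ is rank one, so $\mathrm{tr}(\hat{A})=\norm{\bx}^2$ and $\norm{\hat{A}}_F^2=\norm{\bx}^4$ are both $\Ocal(1)$, the quadratic-form term becomes $\Ocal(\norm{\hat{A}}_F^2)$, and the whole bound collapses to the $\Ocal(d/T)$ rate.
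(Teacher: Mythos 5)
Your unbiasedness argument and the overall structure of your second-moment bound coincide with the paper's proof: the paper likewise writes $\tilde{\bg}=\bigl((\bw_t+\br)^\top\hat{A}(\bw_t+\br)+\hat{\bb}^\top(\bw_t+\br)+\hat{c}\bigr)\br+\bg_R(\bw_t)$, takes expectations using the Rademacher moment identities, pulls out $\norm{\bg_R(\bw_t)}^2\leq N^2$ and a factor $\norm{\br}^2=d$, and reduces everything to the three pieces you list, with $\E_\br\bigl[(\br^\top\hat{A}\br)^2\bigr]$ as the only dimension-dependent term. The genuine gap is your closing assertion that assembling the pieces ``yields a bound of the stated form.'' It does not: your (correct) fourth-moment computation gives $\E_\br\bigl[(\br^\top\hat{A}\br)^2\bigr]=(\mathrm{tr}\,\hat{A})^2+\sum_{i\neq j}(\hat{a}_{i,j}^2+\hat{a}_{i,j}\hat{a}_{j,i})$, and $(\mathrm{tr}\,\hat{A})^2$ is not dominated by $(B+1)^4+\norm{\hat{A}}_F^2$ under the lemma's hypotheses (take $\hat{A}=I$, which satisfies $\norm{\hat{A}}_2\leq 1$: then $(\mathrm{tr}\,\hat{A})^2=d^2$ while $\norm{\hat{A}}_F^2=d$). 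So what you have actually proved is a bound of order $N^2+d\bigl((B+1)^4+\E[(\br^\top\hat{A}\br)^2]\bigr)$, which can exceed the stated right-hand side by a factor of $d$; you flagged the trace term as ``the main obstacle'' but never removed it, and as written the proposal does not establish the lemma.

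You should know, however, that this obstacle is not an artifact of your route. The paper's proof evaluates $\E_\br\bigl[(\br^\top\hat{A}\br)^2\bigr]$ by keeping only the pairings $(i',j')=(i,j)$ in the fourth-moment sum, arriving at exactly $\norm{\hat{A}}_F^2$; your calculation shows this drops the pairings $i=j,\ i'=j'$ (which produce $(\mathrm{tr}\,\hat{A})^2$) and $(i',j')=(j,i)$. The missing term is real: with $\hat{A}=I$, $\hat{\bb}=\mathbf{0}$, $\hat{c}=0$, $R\equiv 0$ and $\bw_t=\mathbf{0}$ one gets $\tilde{\bg}=d\,\br$ and hence $\E\norm{\tilde{\bg}}^2=d^3$, exceeding the lemma's claimed $\Ocal(d^2)$ bound, so the inequality as stated needs either an added $\E[(\mathrm{tr}\,\hat{A})^2]$ (equivalently, $\E[(\br^\top\hat{A}\br)^2]$) on the right-hand side or an extra structural assumption on $\hat{A}$. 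In the motivating ridge-regression application the issue is harmless, since $\hat{A}=\bx\bx^\top$ is rank one and $(\mathrm{tr}\,\hat{A})^2=\norm{\hat{A}}_F^2=\norm{\bx}^4=\Ocal(1)$, so the $\Ocal(d/T)$ conclusion of \thmref{thm:quadup2} survives there; but for the lemma in its stated generality, your ``obstacle'' is precisely the point at which the paper's own argument is too quick.
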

\begin{proof}
By definition of $F(\bw_t)$, we note that
\[
\tilde{\bg} = \left(\left(\bw_t+\br\right)^\top \hat{A} \left(\bw_t+\br\right)+\hat{\bb}^\top \left(\bw_t+\br\right)+\hat{c}\right)
\br+\bg_R(\bw_t).
\]
Using a similar calculation to the one in the proof of \lemref{lem:momentbounds}, we have that the expected value of this expression over $\br$ and $\hat{A},\hat{\bb},\hat{c}$ is
\[
2\bw_t^\top \E[\hat{A}]+\E[\hat{\bb}^\top]+\bg_R(\bw_t),
\]
which is a subgradient of $F(\bw_t)$. As to the moment bound, we have
\begin{align}
\E[\norm{\tilde{\bg}}^2] ~&\leq~
\E\left[4\left(\left(\bw_t+\br\right)^\top \hat{A} \left(\bw_t+\br\right)\right)^2\norm{\br}^2+4\left(\hat{\bb}^\top(\bw_t+\br)\right)^2\norm{\br}^2
+4\hat{c}^2\norm{\br}^2+4\norm{\bg_R(\bw_t)}^2\right]\notag\\
&\leq~ 4d~\E\left[\left(\norm{\hat{A}}_2\norm{\bw_t}^2+2\bw_t^\top \hat{A}\br+\br^\top \hat{A} \br\right)^2+2\left(\left(\hat{\bb}^\top \bw_t\right)^2+\left(\hat{\bb}^\top\br\right)^2\right)+1\right]+4N^2\notag\\
&=~ 4d~\E\left[\left(B^2+2\bw_t^\top \hat{A}\br+\br^\top A \br\right)^2+2\left(B^2+\left(\hat{\bb}^\top \br\right)^2\right)+1\right]+4N^2\notag\\
&=~ 12d\left(B^4+4\E\left[\left(\bw_t^\top \hat{A}\br\right)^2\right]+\E\left[\left(\br^\top A\br\right)^2 \right]\right)+8d\left(B^2+\E\left[\left(\hat{\bb}^\top\br\right)^2\right]
\right)+4d+4N^2.\label{eq:gbound}
\end{align}
Letting $\hat{a}_{i,j}$ denote entry $(i,j)$ in $\hat{A}$, and recalling that by definition of $\br$, $\E[r_i r_j] = \mathbf{1}_{i=j}$, we have that
\begin{align*}
\E\left[\left(\br^\top A\br\right)^2\right] ~&=~
\E\left[\left(\sum_{i,j} r_i r_j \hat{a}_{i,j}\right)^2\right]
~=~ \E\left[\sum_{i,j,i',j'} r_i r_j r_{i'} r_{j'} \hat{a}_{i,j} \hat{a}_{i',j'}\right]\\
&=~ \E\left[\sum_{i,j}r_i^2 r_j^2 \hat{a}_{i,j}^2\right]
~=~ \E\left[\sum_{i,j}\hat{a}_{i,j}^2\right] = \E\left[\norm{\hat{A}}_{F}^2\right].
\end{align*}
Also, using the fact that $\E[\br\br^\top]$ is the identity matrix, we have
\begin{align*}
\E\left[\left(\bw_t^\top \hat{A}\br\right)^2\right] ~=~
\E\left[\bw_t^\top \hat{A}\br \br^\top \hat{A}^\top \bw_t\right]
~=~
\E\left[\bw_t^\top \hat{A}\hat{A}^\top \bw_t\right]
~\leq~ \E\left[\norm{\bw_t}^2\norm{\hat{A}}_2^2\right]
~\leq~ B^2.
\end{align*}
Finally, we have
\[
\E\left[\left(\hat{\bb}^\top\br\right)^2\right]
~=~
\E\left[\hat{\bb}^\top \br \br^\top \hat{\bb}\right]
~=~
\E\left[\norm{\hat{\bb}}^2\right] ~\leq~ 1.
\]
Plugging these inequalities back into \eqref{eq:gbound}, we get that
\begin{align*}
\E[\norm{\bg}^2] ~&\leq~ 12d\left(B^4+4B^2+\E[\norm{\hat{A}}_F^2]\right)
+8d\left(B^2+1\right)+4d+4N^2\\
&=~ 4d\left(3B^4+14 B^2 +3+3~\E\left[\norm{\hat{A}}_F^2\right]\right)+4N^2\\
&\leq~ 12d\left((B+1)^4+\E\left[\norm{\hat{A}}_F^2\right]\right)+4N^2,
\end{align*}
from which the lemma follows.
\end{proof}

\section{Additional Proofs}\label{sec:technical}

\subsection{Proof of \lemref{lem:momentbounds}}

By the way $\br$ is picked, we have that $\E_{\br}[\br_i \br_j]=\mathbf{1}_{i=j}$ and that $\E_{\br}[\br_i \br_j \br_k]=0$ for all $i,j,k$. Thus, letting $\E$ denote expectation w.r.t. $\br$ and the random function values, we have
\begin{align*}
\E[\tilde{\bg}] ~&=~
\E\left[\frac{\sqrt{d}v}{\epsilon} \br\right]\\
&=~
\E\left[\frac{\sqrt{d}}{\epsilon}\left(\left(\bw+\frac{\epsilon}{\sqrt{d}}\br\right)^\top A \left(\bw+\frac{\epsilon}{\sqrt{d}}\br\right)+\bb^\top \left(\bw+\frac{\epsilon}{\sqrt{d}}\br\right)+c+\xi_{\bw_t+\frac{\epsilon}{\sqrt{d}}\br}\right)
\br \right]\\
&=~
\E\left[\frac{\sqrt{d}}{\epsilon}\left(\bw^\top A \bw+ \bb^\top \bw + c + \xi_{\bw_t+\frac{\epsilon}{\sqrt{d}}\br}\right)\br+\frac{\epsilon}{\sqrt{d}}\left(\br^\top A \br\right)\br\right]
+\E\left[\left(2\bw^\top A \br\right)\br+\left(\bb^\top \br\right)\br\right]\\
&=~
0+2\bw^\top A + \bb^\top + 0
~=~
\nabla F(\bw).
\end{align*}
Also, by the assumptions on $A,\bb,c$ and the assumptions on the noise $\xi_{\bw}$, we have
\begin{align*}
\E[\norm{\tilde{\bg}}^2] ~&=~ \E\left[\frac{d v^2}{\epsilon^2}\norm{\br}^2\right]
~=~ \frac{d^2}{\epsilon^2} \E[v^2]
~=~ \frac{d^2}{\epsilon^2} \E\left[\left(F \left(\bw_t+\frac{\epsilon}{\sqrt{d}}\br\right)+\xi_{\bw_t+\frac{\epsilon}{\sqrt{d}}\br}\right)^2\right]\\
&\leq \frac{d^2}{\epsilon^2} \E\left[2\left(F \left(\bw_t+\frac{\epsilon}{\sqrt{d}}\br\right)\right)^2+2\xi_{\bw_t+\frac{\epsilon}{\sqrt{d}}\br}^2\right]\\
&\leq~ \frac{2d^2}{\epsilon^2}\left(\sup_{\bw:\norm{\bw}\leq B+\epsilon} (F(\bw))^2+\max\left\{1,\norm{\bw_t+\frac{\epsilon}{\sqrt{d}}\br}^2\right\}\right)\\
&\leq~ \frac{2d^2}{\epsilon^2}\left(\sup_{\bw:\norm{\bw}\leq B+\epsilon} \left(\bw^\top A \bw+\bb^\top \bw+c\right)^2+(B+1)^2\right)\\
&\leq~ \frac{2d^2}{\epsilon^2} \left(\left((B+\epsilon)^2\text+(B+\epsilon)+1\right)^2+(B+1)^2\right)\\
&\leq~ \frac{4d^2}{\epsilon^2}(B+1)^4
\end{align*}
as required.

\subsection{Proof of \lemref{lem:probdkl}}

We have the following:
\begin{align}
&\E\left[\sum_{i=1}^{d}\mathbf{1}_{\bar{w_i}e_i < 0}\right]
~=~ \sum_{i=1}^{d}\Pr\left(\bar{w_i}e_i < 0\right)\notag\\
&=~ \frac{1}{2}\sum_{i=1}^{d}\left(\Pr(\bar{w}_{i}<0 | e_i>0)+\Pr(\bar{w}_{i}>0 | e_i<0)\right)\notag\\
&=~
\frac{1}{2}\left(d-\sum_{i=1}^{d}\left(\Pr(\bar{w}_{i}>0 | e_i>0)-\Pr(\bar{w}_{i}>0 | e_i<0)\right)\right)\notag\\
&\geq~
\frac{d}{2}\left(1-\frac{1}{d}\sum_{i=1}^{d}\left|\Pr(\bar{w}_{i}>0 | e_i>0)-\Pr(\bar{w}_{i}>0 | e_i<0)\right|\right)\notag\\
&\geq~
\frac{d}{2}\left(1-\sqrt{\frac{1}{d}\sum_{i=1}^{d}\left(\Pr(\bar{w}_{i}>0 | e_i>0)-\Pr(\bar{w}_{i}>0 | e_i<0)\right)^2}\right),
\label{eq:prsum}
\end{align}
where the last inequality is by the fact that for any values $a_1,\ldots,a_d$, it holds that $|a_1|+\ldots+|a_d| \leq \sqrt{d} \sqrt{a_1^2+\ldots+a_d^2}$.

Consider (without loss of generality) the term corresponding to the first coordinate, namely
\[
\left(\Pr(\bar{w}_{1}>0 | e_1>0)-\Pr(\bar{w}_{1}>0 | e_1<0)\right)^2.
\]
This term equals
\begin{align*}
&\left(\sum_{e_2,\ldots,e_d}\Pr(\{e_j\}_{j=2}^d)\left(\Pr\left(\bar{w}_{1}>0| e_1>0,\{e_j\}_{j=2}^d\right)-\Pr\left(\bar{w}_{1}>0|e_1<0,\{e_j\}_{j=2}^d\right)\right)\right)^2
\\
&\leq~ \sum_{e_2,\ldots,e_d}\Pr(\{e_j\}_{j=2}^d)\left(\Pr\left(\bar{w}_{1}>0| e_1>0,\{e_j\}_{j=2}^d\right)-\Pr\left(\bar{w}_{1}>0|e_1<0,\{e_j\}_{j=2}^d\right)\right)^2\\
&\leq~
\sup_{e_2,\ldots,e_d}\left(\Pr\left(\bar{w}_{1}>0| e_1>0,\{e_j\}_{j=2}^d\right)-\Pr\left(\bar{w}_{1}>0|e_1<0,\{e_j\}_{j=2}^d\right)\right)^2
\end{align*}
By Pinsker's inequality and the assumption that $\bar{\bw}_T$ is a deterministic function of $v_1,\ldots,v_T$, this expression is at most
\[
\frac{1}{2}D_{kl}\left(\Pr\left(v_1,\ldots,v_T|e_1>0,\{e_j\}_{j=2}^d\right)||\Pr\left(v_1,\ldots,v_T|e_1<0,\{e_j\}_{j=2}^d\right)\right),
\]
where $D_{kl}(P||Q)$ is the Kullback-Leibler divergence between the two distributions. By the chain rule (see e.g. \cite{covthom06}), we can upper bound the above by
\[
\frac{1}{2}\sum_{t=1}^{T} D_{kl}\left(\Pr\left(v_t|e_1>0,\{\be_j\}_{j=2}^{d},\{v_l\}_{l=1}^{t-1}\right)
~||~
\Pr\left(v_t|e_1<0,\{\be_j\}_{j=2}^{d},\{v_l\}_{l=1}^{t-1}\right)\right).
\]
Plugging these bounds back into \eqref{eq:prsum}, the result follows.

\subsection{Proof of \thmref{thm:quadlowregret}}

We may assume without loss of generality that $T\geq d^2$, and it is enough to show that the expected average regret is at least $0.02\sqrt{d^2/T}$. This is because if there was a strategy with $<0.02$ average regret after $T<d^2$ rounds, then for the case of $d^2$ rounds, we could just run that strategy for $T$ rounds, compute the average $\bar{\bw}_T$ of all points played so far, and then repeatedly choose $\bar{\bw}_T$ in the remaining rounds. By Jensen's inequality, this would imply a $<0.02$ average regret after $d^2$ rounds, in contradiction.

Let $\bar{\bw}_T$ be an arbitrary deterministic function of $\bw_1,\ldots,\bw_T$. A proof identical to that of \thmref{thm:quadlow}, up to \eqref{eq:quadloweq}, implies that for any $\mu>0$, there exists a quadratic function of the form
\[
F_{\be}=\frac{1}{2}\norm{\bw}^2-\inner{\be,\bw},
\]
with $\be\in \{-\mu,\mu\}^d$, such that
\[
\E[F_{\be}(\bar{\bw}_T)-F_{\be}(\bw^*)] \geq
\E\left[\frac{d\mu^2}{4}\left(1-\sqrt{\frac{\mu^2}{d}\sum_{t=1}^{T}
\min\left\{\norm{\bw_t}^2,\frac{1}{\norm{\bw_t}^2}\right\}}\right)\right].
\]
In particular, letting $\bar{\bw}_T=\frac{1}{T}\sum_{t=1}^{T}\bw_t$, using Jensen's inequality, and discarding the $\min$, we get that
\begin{equation}\label{eq:regretlow1}
\E\left[\frac{1}{T}\sum_{t=1}^{T}F_{\be}(\bw_t)-F_{\be}(\bw^*)\right]
~\geq~\frac{d\mu^2}{4}\left(1-\sqrt{\frac{\mu^2}{d}\sum_{t=1}^{T}
\norm{\bw_t}^2}\right).
\end{equation}
However, we also know that by strong convexity of $F_{\be}$, we have
\begin{equation}\label{eq:regretlow2}
\E\left[\frac{1}{T}\sum_{t=1}^{T}F_{\be}(\bw_t)-F_{\be}(\bw^*)\right]\geq \frac{1}{2T}\sum_{t=1}^{T}\norm{\bw_t-\be}^2.
\end{equation}
Using the fact that
\[
\norm{\bw_t}^2 = \norm{\bw_t-\be+\be}^2\leq (\norm{\bw_t-\be}+\norm{\be})^2 \leq 2\norm{\bw_t-\be}^2+2\norm{\be}^2,
\]
we get that
\[
\norm{\bw_t-\be}^2 ~\geq~ \frac{1}{2}\norm{\bw_t}^2-\norm{\be}^2
~=~ \frac{1}{2}\norm{\bw_t}^2-d\mu^2.
\]
Substituting into \eqref{eq:regretlow2} and slightly manipulating the resulting inequality, we get
\[
\sum_{t=1}^{T}\norm{\bw_t}^2 \leq 4T \E\left[\frac{1}{T}\sum_{t=1}^{T}F_{\be}(\bw_t)-F_{\be}(\bw^*)\right]+2Td\mu^2.
\]
For simplicity, denote the average regret term $\E\left[\frac{1}{T}\sum_{t=1}^{T}F_{\be}(\bw_t)-F_{\be}(\bw^*)\right]$ by $R$. Substituting the expression above into \eqref{eq:regretlow1}, we get
\[
R ~\geq~ \frac{d\mu^2}{4}\left(1-\sqrt{\frac{\mu^2}{d}\left(4TR+2Td\mu^2\right)}\right)
~\geq~ \frac{d\mu^2}{4}\left(1-\sqrt{\frac{4\mu^2 TR}{d}}-\sqrt{2T\mu^4}\right).
\]
Rearranging and simplifying, we get
\[
R+\frac{\sqrt{dT}}{2}\mu^3\sqrt{R}+\frac{d\mu^2}{4}\left(\mu^2\sqrt{2T}-1\right)\geq 0.
\]
The equation above can be seen as a quadratic function of $\sqrt{R}$, with the roots
\[
\frac{1}{2}\left(-\frac{\sqrt{dT}}{2}\mu^3\pm \sqrt{\left(\frac{\sqrt{dT}}{2}\mu^3\right)^2+d\mu^2\left(1-\mu^2\sqrt{2T}\right)}\right).
\]
Now, recall that $\mu$ is a free parameter that we can choose at will. If we choose it so that $1-\mu^2\sqrt{2T}> 0$, then it is easy to show that we get two roots, one strictly positive and one strictly negative. Since we know $\sqrt{R}$ is a nonnegative quantity, we get that
\begin{align*}
\sqrt{R} ~&\geq~ \frac{1}{2}\left(-\frac{\sqrt{dT}}{2}\mu^3+ \sqrt{\left(\frac{\sqrt{dT}}{2}\mu^3\right)^2+d\mu^2\left(1-\mu^2\sqrt{2T}\right)}\right)\\
&=~ \frac{\sqrt{d}\mu}{2}\left(-\frac{\sqrt{T}}{2}\mu^2+\sqrt{\frac{T}{4}\mu^4+1-\mu^2\sqrt{2T}}\right).
\end{align*}
Finally, choosing $\mu = T^{-1/4}/2$ (which indeed satisfies $1-\mu^2\sqrt{2T}> 0$), and simplifying, we get
\[
\sqrt{R} \geq 0.17\sqrt{\frac{d}{\sqrt{T}}}.
\]
Recalling that $R$ is the expected average regret, it only remains to take the square of the two sides. We note that since we assume $T\geq d^2$, then $\norm{\be} = \sqrt{d}\mu=\sqrt{\sqrt{d^2/T}}/2\leq 1/2$, as specified in the theorem statement.

\subsection{Proof of \thmref{thm:convex}}

Let $\mu>0$ be a parameter to be determined later. As discussed in the text, we will look at functions of the form
\begin{equation}\label{eq:strf}
F_{\be}(\bw) = \norm{\bw}^2-\sum_{i=1}^{d}\frac{e_i w_i}{1+(w_i/e_i)^2},
\end{equation}
where $\be$ is uniformly distributed on $\left\{-\mu,+\mu\right\}^d$. Our goal will be to prove a lower bound on the expected optimization error over the randomized choice of $F_{\be}$, with respect to deterministic querying strategies. As explained in the proof of \thmref{thm:quadlow}, this would imply the existence of some fixed $F_{\be}$ such that the expected optimization error over a (possibly randomized) querying strategy is the same.

We will need the following properties of $F_{\be}$:
\begin{lemma}\label{lem:function}
For any $\mu>0$ and any $\be\in \{-\mu,+\mu\}^d$, the function $F_{\be}$ in \eqref{eq:strf} is:
\begin{itemize}
\item $0.5$-Strongly convex and $3.5$-smooth
\item $2+\sqrt{2d}\mu$-Lipschitz for any $\bw$ such that $\norm{\bw}\leq 1$.
\item $F_{\be}$ is globally minimized at $\bw^*=c\be$, where $c=0.3489... \geq 1/3$
\item For any $\be'\in \{-\mu,+\mu\}^d$ which differs from $\be$ in a single coordinate, and for any $\bw\in \reals^d$, it holds that $|F_{\be}(\bw)-F_{\be'}(\bw)|\leq \mu^2$.
\end{itemize}
\end{lemma}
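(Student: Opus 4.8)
The plan is to verify each of the four bullets of Lemma~\ref{lem:function} by reducing everything to the one-dimensional building block $\phi_e(w) = w^2 - \frac{ew}{1+(w/e)^2}$, since $F_{\be}(\bw) = \sum_{i=1}^d \phi_{e_i}(w_i)$ is separable. Writing $w = e s$ (valid since $e \neq 0$), we get $\phi_e(es) = e^2\big(s^2 - \frac{s}{1+s^2}\big)$, so all the analytic properties are governed by the fixed scalar function $h(s) = s^2 - \frac{s}{1+s^2}$, with $e^2$-scaling controlling strong convexity/smoothness constants (they are \emph{scale-invariant} in $e$ because $\phi_e''(es) = 2h''(s)$ after the chain rule cancels the $e^2$), and $\mu^2$-scaling controlling the Lipschitz and perturbation bounds.

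First I would compute $h'(s)$ and $h''(s)$ explicitly: $\frac{d}{ds}\frac{s}{1+s^2} = \frac{1-s^2}{(1+s^2)^2}$ and its derivative is $\frac{2s(s^2-3)}{(1+s^2)^3}$, so $h''(s) = 2 - \frac{2s(s^2-3)}{(1+s^2)^3}$. For strong convexity and smoothness I need to show $0.5 \le \phi_e''(w) \le 3.5$ for all $w$, i.e. $0.25 \le h''(s) \le 1.75$ for all $s \in \reals$ (using $\phi_e''(w) = 2h''(w/e)$). This is a single-variable extremization of the rational function $\frac{s(s^2-3)}{(1+s^2)^3}$ over $\reals$; I would bound its range (it is an odd function, vanishing at $0, \pm\sqrt 3$, and decaying at infinity, so its extrema occur at finitely many critical points that can be located or crudely bounded), confirming $|h''(s)-2| \le$ something small enough. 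Then strong convexity and smoothness of $F_{\be}$ follow by summing coordinatewise (separable sums of $\lambda$-strongly convex / $\mu$-smooth functions retain the same constants). For the minimizer claim, $\phi_e'(w) = 0$ becomes $2e^2 s - e^2\frac{1-s^2}{(1+s^2)^2} = 0$, i.e. $2s(1+s^2)^2 = 1-s^2$; this has a unique real root $s = c = 0.3489\ldots$ (uniqueness from strict convexity), and then $w^* = ce$ coordinatewise gives $\bw^* = c\be$; I would just verify numerically that $c \ge 1/3$ and that $\norm{\bw^*} = c\sqrt d\,\mu \le 1$ once $\mu$ is chosen as $\Theta(1/\sqrt d)$ or smaller, placing the minimum in the unit ball.

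For the Lipschitz bound over $\norm{\bw}\le 1$: $\nabla F_{\be}(\bw)$ has $i$-th coordinate $2w_i - e_i\frac{1-(w_i/e_i)^2}{(1+(w_i/e_i)^2)^2}$. The first part contributes $\norm{2\bw} \le 2$. The second part has $i$-th coordinate of magnitude $|e_i| \cdot |g(w_i/e_i)|$ where $g(s) = \frac{1-s^2}{(1+s^2)^2}$ satisfies $|g(s)| \le 1$ (check: $g(0)=1$ and the function is bounded by $1$ in absolute value — another elementary one-variable bound), so that part has norm at most $\sqrt{\sum_i e_i^2} = \sqrt d\,\mu$; combining via triangle inequality and $\sqrt 2$-type slack gives the stated $2 + \sqrt{2d}\,\mu$ (the $\sqrt 2$ absorbing any looseness). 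For the last bullet, if $\be, \be'$ differ only in coordinate $i$, then $F_{\be}(\bw) - F_{\be'}(\bw) = \phi_{e_i}(w_i) - \phi_{-e_i}(w_i)$ (all other terms cancel), and since $\phi_{-e}(w) = \phi_e(w) + \frac{2ew}{1+(w/e)^2}$... more carefully, $\phi_{e}(w) - \phi_{-e}(w) = -\frac{2ew}{1+(w/e)^2} = -\frac{2e^2(w/e)}{1+(w/e)^2} = -2\mu^2 \frac{s}{1+s^2}$ with $s = w/e_i$, and $\big|\frac{s}{1+s^2}\big| \le \frac12$, giving $|F_{\be}(\bw) - F_{\be'}(\bw)| \le \mu^2$ exactly as claimed. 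The main obstacle is the second-derivative extremization for the strong-convexity/smoothness constants: getting clean bounds $0.25 \le h''(s) \le 1.75$ requires carefully bounding the rational function $\frac{s(s^2-3)}{(1+s^2)^3}$, and while it is elementary single-variable calculus, one must be a little careful to locate the true extrema (or find sufficiently tight crude bounds) so that the resulting constants actually fall inside $[0.5, 3.5]$ after the factor-of-$2$ scaling — everything else is routine coordinatewise bookkeeping and the separable structure does all the dimensional work for free.
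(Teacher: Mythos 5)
Your approach is the same as the paper's: exploit separability $F_{\be}(\bw)=\sum_i \phi_{e_i}(w_i)$, rescale $w=es$ to reduce everything to the fixed scalar function $h(s)=s^2-\tfrac{s}{1+s^2}$, bound its second derivative for strong convexity/smoothness, bound the gradient coordinatewise for the Lipschitz claim, use $|s|/(1+s^2)\le 1/2$ for the perturbation bound, and solve the critical-point equation numerically for the minimizer. The perturbation bound, the Lipschitz argument (your triangle-inequality version is in fact slightly cleaner than the paper's footnote and gives $2+\sqrt{d}\,\mu\le 2+\sqrt{2d}\,\mu$), and the minimizer argument are all correct.

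The one concrete flaw is the chain-rule scaling in the curvature step. From $\phi_e(w)=e^2 h(w/e)$ you get $\phi_e''(w)=e^2\cdot e^{-2}h''(w/e)=h''(w/e)$, with no extra factor of $2$; your claim $\phi_e''(es)=2h''(s)$ is wrong, and the intermediate target you derive from it, $0.25\le h''(s)\le 1.75$, is actually false (e.g.\ $h''(0)=2$), so that step as written would fail. The correct target is simply $0.5\le h''(s)\le 3.5$, i.e.\ $\left|\frac{s(3-s^2)}{(1+s^2)^3}\right|\le \frac{3}{4}$, which is exactly the bound the paper verifies; it does hold (the extrema are at $s^2=3\pm 2\sqrt{2}$, with maximal absolute value $\approx 0.729$ at $s=\sqrt{2}-1$), and your plan of locating the critical points of this odd rational function establishes it. Note that the margin is thin, so a crude bound such as splitting the numerator will not suffice; you need the actual extremization, after which the rest of your argument goes through unchanged.
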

\begin{proof}
Note that we can write the function $F_{\be}(\bw)$ as $\sum_{i=1}^{d} g_{e_i}(w_i)$, where
\[
g_a(x) = x^2-\frac{a x}{1+(x/a)^2}.
\]
It is not hard to realize that to prove the lemma, it is enough to prove that:
\begin{enumerate}
\item $g_a(x)$ is $0.5$-strongly convex and $3.5$-smooth;\label{item:strsmooth}
\item $|g'_a(x)|$ is at most\footnote{Since this would imply that $\norm{\nabla F_{\be}(\bw)}$ is at most $\sqrt{\sum_{i=1}^{d}(2|w_i|+\mu)^2} \leq \sqrt{\sum_{i=1}^{d}(4w_i^2+2\mu^2)} \leq \sqrt{\sum_{i=1}^{d}(4w_i^2)}+\sqrt{\sum_{i=1}^{d}(2\mu^2)} =
    2\norm{\bw}+\sqrt{2d}\mu$, which is at most $2+\sqrt{2d}\mu$ for any $\bw$ in the unit ball.}
     $2|x|+|a|$;\label{item:lipschitz}
\item For all $\mu$, $|g_{\mu}(x)-g_{-\mu}(x)| \leq \mu^2$; \label{item:diffbounnd}
\item $g_{a}(x)$ is minimized at $ca$ where $c=0.3489...$.\label{item:min}
\end{enumerate}

To show item \ref{item:strsmooth}, we calculate the second derivative of $g_a(x)$, which is
\[
2\left(1+\frac{a^3 x(3a^2-x^2)}{(a^2+x^2)^3}\right).
\]
By definition of strong convexity and smoothness, it is enough to show that this term is always at least $0.5$ and at most $3.5$.
Substituting $x=ay$ and simplifying, we get
\[
2\left(1+\frac{y(3-y^2)}{(1+y^2)^3}\right).
\]
It is a straightforward exercise to verify that $\left|\frac{y(3-y^2)}{(1+y^2)^3}\right|$ is at most $3/4$ for all $y\in \reals$, hence the expression above is always in $[0.5,3.5]$ as required.

As to item \ref{item:lipschitz}, we note that
\[
g'_a(x) ~=~ 2x-\frac{a^5-a^3 x^2}{(a^2+x^2)^2}
~=~ 2x-a\frac{1-(x/a)^2}{(1+(x/a)^2)^2}.
\]
For any value of $x/a$, the value of the fraction above is easily verified to be at most $1$, hence we can upper bound $|g'_a(x)|$ by $2|x|+|a|$ as required.

As to item \ref{item:diffbounnd}, we have
\[
|g_{\mu}(x)-g_{-\mu}(x)| = \frac{2|\mu x|}{1+(x/\mu)^2} = \mu^2 \frac{2|\mu x|}{\mu^2+x^2} \leq \mu^2,
\]
where the last step uses $\mu^2+x^2 \geq 2|\mu x|$, which follows from the identity $(\mu+|x|)^2\geq 0$.

Finally, as to item \ref{item:min}, we note that this function can be equivalently written as
\[
g_a(x) = a^2\left((x/a)^2-\frac{(x/a)}{1+(x/a)^2}\right).
\]
Substituting $x=ay$, we get $a^2 (y^2-y/(1+y^2))$. A numerical calculation reveals that the minimizing value of $y$ is $0.3489...$, hence the minimizing value of $x$ is $0.3489...*a$ as required.
\end{proof}

We now begin to derive the lower bound. Using strong convexity and the lemma, we have
\begin{align}
\E[F(\bar{\bw}_T)-F(\bw^*)] ~&\geq~ \E\left[\frac{1}{4}\norm{\bar{\bw}_T-\bw^*}^2\right]
~=~\frac{1}{4} \E\left[\sum_{i=1}^{d}(\bar{w}_i-w^*_i)^2\right]
~\geq~ \frac{1}{4} \E\left[\sum_{i=1}^{d}(w^*_i)^2 \mathbf{1}_{\bar{w}_i w^*_i <0}\right]\notag\\
&\geq \frac{1}{4} \E\left[\sum_{i=1}^{d}\left(\frac{e_i}{3}\right)^2\mathbf{1}_{\bar{w}_i e_i <0}\right]
~=~ \frac{\mu^2}{36}\E\left[\sum_{i=1}^{d}\mathbf{1}_{\bar{w}_i e_i <0}\right]
\label{eq:lowstr}
\end{align}

We now lower bound this term using \lemref{lem:probdkl}. To do so, we need to upper bound the KL divergence of the query values at round $t$ under the two hypotheses $e_i=+\mu$ and $e_i=-\mu$, the other coordinates being fixed. We assume each noise term $\xi_{\bw}$ is a standard Gaussian random variable. Thus, the query value that we see is distributed as
\[
F_{\be}(\bw_t)+\xi_{\bw} ~=~ \norm{\bw}^2-\sum_{j=1}^{d}\frac{e_j w_j}{1+(w_j/e_j)^2}+\xi_{\bw}.
\]
where one of the coordinates $i$ of $\be$ is either $+\mu$ or $-\mu$ and the other coordinates are fixed. This is a Gaussian distribution, with mean $F_{\be}(\bw_t)$ and variance $1$. By \lemref{lem:function}, the difference between the two means under the two cases $e_i=+\mu$, $e_i=-\mu$ is at most $\mu^2$, so by \lemref{lem:gaussians}, the KL-divergence is at most $\mu^4/2$.
Using \lemref{lem:probdkl}, this implies that \eqref{eq:lowstr} is at least
\[
\frac{d\mu^2}{72}\left(1-\sqrt{\frac{1}{d}\sum_{i=1}^{d}\sum_{t=1}^{T}\frac{\mu^4}{2}}\right)
~=~
\frac{d\mu^2}{72}\left(1-\sqrt{\frac{T\mu^4}{2}}\right).
\]
Picking $\mu=T^{-1/4}$, we get a lower bound of $d/144\sqrt{2T} > 0.004 \sqrt{d^2/T}$.

Finally, note that for this choice of $\mu$, by \lemref{lem:function}, our function $F_{\be}$ (for any realization of $\be$) is $2+\sqrt{2d/\sqrt{T}}$- Lipschitz in the unit ball, and has a global minimum with norm at most $0.35\sqrt{d/\sqrt{T}}$. If $T\geq d^2$, the Lipschitz parameter is at most $4$ and the global minimum is inside the unit ball, satisfying the requirements in the theorem statement. If $T<d^2$, then the bound cannot be better than what we would obtain for $T=d^2$ (the argument is similar to the one in the proof of \thmref{thm:quadlowregret}), which is $0.004$. Thus, for any $T$, the bound is at least
\[
\min\left\{0.004,0.004\sqrt{\frac{d^2}{T}}\right\}
~=~
0.004 \min\left\{1,\sqrt{\frac{d^2}{T}}\right\}
\]
as required.

\end{document}